\def\eqref#1{equation~\ref{#1}}
\def\1{\bm{1}}
\DeclareMathAlphabet{\mathsfit}{\encodingdefault}{\sfdefault}{m}{sl}
\SetMathAlphabet{\mathsfit}{bold}{\encodingdefault}{\sfdefault}{bx}{n}
\definecolor{brandBlue}{HTML}{CDC1FF} 
\definecolor{mycolor}{HTML}{4CC9FE} 
\theoremstyle{definition}
\newtheorem{theorem}{Theorem}
\newtheorem{definition}{Definition}
\newtheorem{lemma}{Lemma}
\newtheorem{corollary}{Corollary}
\newcommand{\shortname}{\texttt{APRM}}
\title{Adversarial Training for Process Reward \\Models}
\author{Gurusha Juneja, Deepak Nathani, William Yang Wang \\
Department of Computer Science\\
University of California, Santa Barbara\\
\texttt{gurusha@ucsb.edu} \\
}
\begin{document}

\maketitle

\begin{abstract}
Process Reward Models (PRMs) enhance reasoning ability of LLMs by providing step-level supervision. 
However, their widespread adoption is limited due to expensive manual step-level annotation and poor generalization of static training data to novel errors. 
We introduce Adversarially Trained PRMs (\texttt{APRM}), where
a Generator ($G$) learns to produce reasoning errors to deceive a PRM ($R$), while $R$ concurrently learns to detect them.
This interaction yields progressively harder negatives for $R$, improving it's robustness and generalization to novel errors without requiring manual step-level labels.
Averaged across diverse mathematical reasoning benchmarks, \texttt{APRM} improves solver accuracy by $+3.4$ percentage points (pp) over the strongest PRM baseline. \texttt{APRM} achieves gains of $+5.3$ pp on out-of-distribution tasks.  \footnote{The code is available at: \href{https://gurusha01.github.io/PRM_NIPS/}{https://gurusha01.github.io/PRM\_NIPS/}} 
\end{abstract}
\textit{``It is not the strongest of the species that survives, nor the most intelligent, but the one most adaptable to change''} \hfill \textit{- Charles Darwin}
\section{Introduction}

Large Language Models (LLMs) have demonstrated impressive capabilities in complex reasoning, even achieving gold-medal-level performance on the International Mathematical Olympiad questions, when augmented with specialized systems \citep{alphageometry2024, DeepMind2025GeminiIMO}. However, their reasoning remains fundamentally unreliable, often producing factually incorrect outputs despite appearing coherent \citep{chu2023investigating, bommasani2021opportunities, uesato2022solving}. This unreliability poses significant risks, particularly in high-stakes domains such as scientific discovery, finance, or medicine, where subtle errors can have severe consequences. As these models become better, their errors become more nuanced\citep{verifystepbystep, zhang2024rest, guan2025rstar, gpt4}.
Detecting such errors is essential for trustworthy reasoning, necessitating the need for Process Reward Models (PRMs) that can identifying very subtle errors.

Current PRM training techniques predominantly rely on static datasets \citep{lightman2023lets, wang2024math, prm800k}. These methods are inherently limited as they provide a fixed error distribution that cannot adapt to more nuanced errors. On the other hand, synthetic data generation techniques like \cite{zhang2024rest, lu2024autopsv, cobbe2021training, yu2024ovm, uesato2022solving} rely on the flawed assumption that correct final answers imply correct intermediate steps. These paradigms lack mechanisms to actively mine harder negatives. 

To address this, we require a PRM training paradigm that provides \emph{an adaptive curriculum, where negative sample hardness dynamically increases while aligning with the PRM's evolving capabilities}. This makes the PRM robust to increasingly subtle errors and also ensures optimal and smooth learning. In this paper, we introduce Adversarially Trained Process Reward Models (\shortname{}). We formulate PRM training as a two-player, general-sum, non-cooperative game. In \shortname{} (Fig \ref{fig:main-fig}), a Generator ($G$) actively learns to produce plausible but incorrect reasoning steps to deceive a Reward Model ($R$), which concurrently learns to detect these increasingly subtle errors. This competitive interaction creates the desired adaptive curriculum, directly addressing the limitations of static or passively generated training data. 
\begin{figure*}
    \centering
    \includegraphics[width=\linewidth]{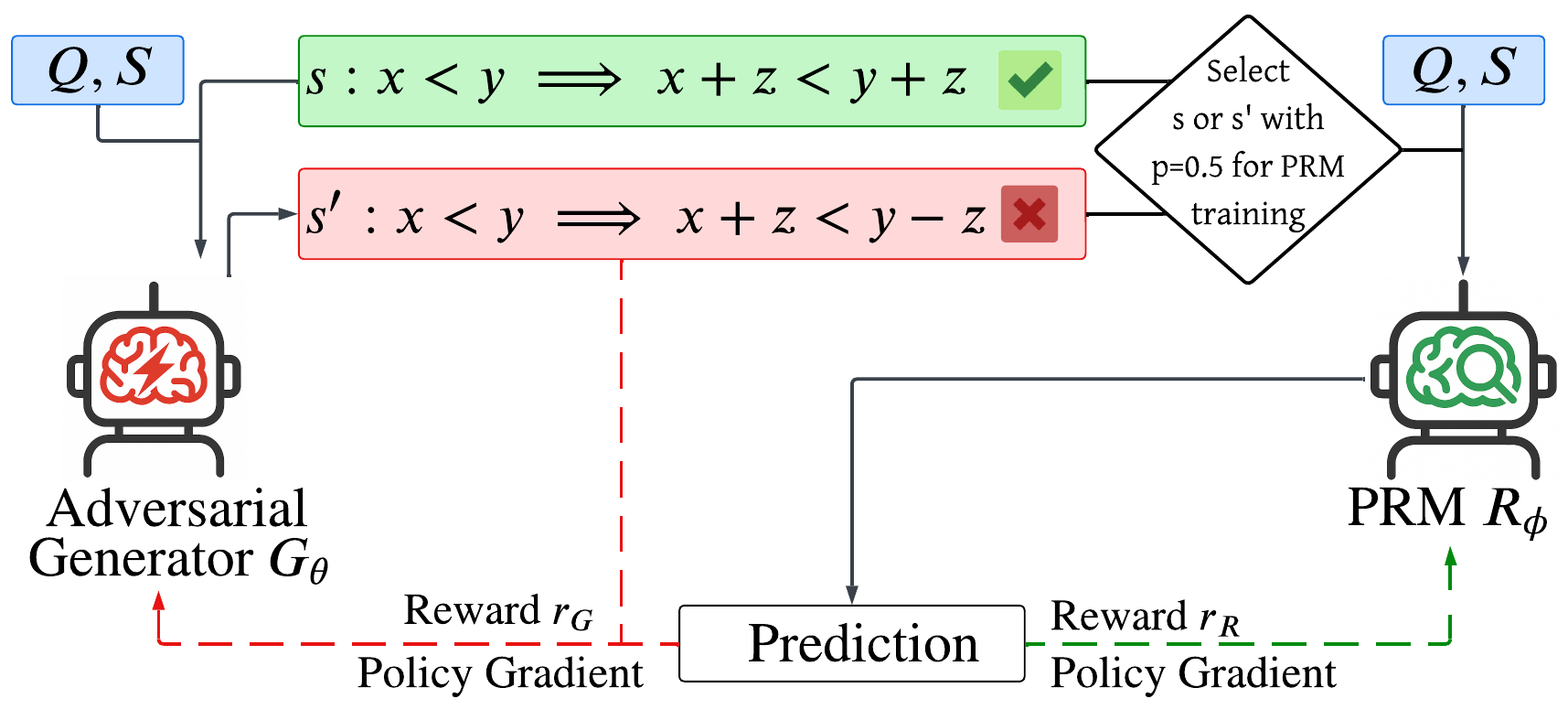}
    \caption{\textbf{\shortname{} Overview.} 
    The Generator $G_\theta$ perturbs the correct step $s$ into an incorrect step $s'$. The PRM $R_\phi$ evaluates a step for it's correctness given the question $Q$ and partial solution $S$. Both $G_\theta$ and $R_\phi$ optimize for rewards given based on the perturbed step and $R_\phi$'s prediction, details in section ~\ref{sec:reward_definitions}. }
    \vspace{-1.5em}
    \label{fig:main-fig}
\end{figure*}

\shortname{} is both theoretically and empirically grounded. We employ 
game-aware optimizers and symmetric policy regularization to ensure stable training dynamics with provable linear-rate convergence to a Nash Equilibrium. Unlike prior adversarial approaches~\citep{gan, madry2018towards, sppo} that were framed as zero-sum games, we model the Generator–PRM interaction as a general-sum game. This yields a stationary-point Nash Equilibrium rather than a mini-max saddle point. Experiments on mathematical reasoning benchmarks show that \shortname{} outperforms state-of-the-art PRM training and prompting methods by up to 5.3 pp, with the largest gains on out-of-distribution problems, highlighting it's enhanced robustness and generalization.

We summarize our contribution as follows:
\vspace{-1em}
\begin{itemize}[leftmargin=*, noitemsep]
\item	\textbf{Adversarial training for PRMs.} We introduce PRM training as a two-player adversarial game between a generator and a PRM. The generator learns to produce harder negatives, leading to stronger and more generalizable Process Reward Models.
\item	\textbf{Theoretical Analysis.} We provide theoretical analysis of the game, showing the existence of Nash Equilibria. We establish linear convergence guarantees under regularized game utilities and use game-aware optimizers to enable stable training.
\item	\textbf{Experimental validation.} We experimentally validate the proposed adversarial training setup. Across math and science reasoning benchmarks, \shortname{} yields an average improvement of 3.4 pp over state-of-the-art PRM training methods, consistently observed across solvers of different sizes.
\end{itemize}

\vspace{-1em}




\section{Adversarial Process Reward Models}
\label{sec:notations_full_reference}

We formulate the problem of learning a robust process reward model (PRM) as a two-player adversarial game. This game consists of a Generator $G$, tasked with perturbing a correct reasoning steps to make it incorrect, and a Reward Model $R$, whose objective is to predict the correctness of a given reasoning step. $G$ aims to deceive $R$ into classifying the corrupted step as correct. $R$, on the other hand, aims for accurate classification, detecting even the most nuanced errors generated by $G$ (Figure \ref{fig:main-fig}). Through this iterative competition, $G$ learns to generate increasingly subtle errors, while $R$ concurrently evolves to detect even the most nuanced flaws.

\paragraph{Notation.} 
We denote the Generator by $G_{\theta}$ and the Reward Model by $R_{\phi}$, parameterized by weights $\theta \in \Theta$ and $\phi \in \Phi$, respectively. A correct reasoning step is represented as a sequence $s = \{s_1, s_2, \dots, s_m\}$, where each $s_i$ is a token from the finite vocabulary $\mathcal{V}$.

The Generators action is a token sequence $a_G \in \mathcal{A}_G$, where $a_G = \{s'_1, s'_2, \dots, s'_n\}$ with $s'_i \in \mathcal{V}$ and $n \leq \ell$ ($\ell$ is the maximum sequence length). The action space $\mathcal{A}_G$ thus consists of all token sequences of length $1$ to $\ell$. These generated sequences correspond to corrupted reasoning steps.

The Reward Model’s action is a binary classification $a_R \in \mathcal{A}_R = \{0,1\}$, where $1$ denotes \emph{Correct} and $0$ denotes \emph{Incorrect}. The players use mixed strategies(policies) defined as $\pi_{\theta} \in \Delta(\mathcal{A}_G)$ for the Generator and $\pi_{\phi} \in \Delta(\mathcal{A}_R)$ for the Reward Model, where $\Delta(\cdot)$ denotes a probability distribution over the respective action spaces. Note that these strategies are distinct from the underlying model parameters $\theta$ and $\phi$.

Let $y(a_G) \in \{0, 1\}$ be the ground-truth correctness label of a reasoning step generated by $G_\theta$, i.e. $y=1$ for a correct generated step and, $0$ for incorrect generated step, $a_R=y'(a_G) \in \{0, 1\}$ be $R_\phi$'s classification output for $G$'s action. The scalar reward functions for $G_\theta$ and $R_\phi$ are represented by $r_G(y, y')$ and $r_R(y, y')$,
jointly determined by the ground-truth validity of $G_\theta$'s output and $R_\theta$'s classification. Detailed definitions are provided in Section~\ref{sec:reward_definitions}

Utilities represent the expected payoffs for each player. Given a strategy profile $(\pi_{\theta}, \pi_{\phi})$ and a distribution $p(s)$ over correct initial reasoning steps $s$, the utilities for both the players are given by:
\begin{equation}
\begin{split}
    U_G(\pi_{\theta}, \pi_{\phi}) = \mathbb{E}_{s \sim p(s), a_G \sim \pi_{\theta}(\cdot|s), y' \sim \pi_{\phi}(\cdot|a_G)} [r_G(y(a_G), y')] \\
    U_R(\pi_{\theta}, \pi_{\phi}) = \mathbb{E}_{s \sim p(s), a_G \sim \pi_{\theta}(\cdot|s), y' \sim \pi_{\phi}(\cdot|a_G)} [r_R(y(a_G), y')]
\end{split}
\end{equation}
\subsection{Reward Function}
\label{sec:reward_definitions}


We use reinforcement learning (RL) to train $G_\theta$ and $R_\phi$ through iterative adversarial interaction. The correctness $y(a_G)$ of $G$'s generation is determined by an algorithmic oracle that uses non-LLM metrics, such as cosine similarity, entity matching (nouns and numbers), and logical operation order, relative to a reference solution. Further details of this oracle are provided in Appendix~\ref{app:oracle}. Given $y(a_G)$ and the Reward Model's classification $y'(a_G)$, the reward functions for both agents are structured as follows:



\paragraph{Reward Function for $R_\phi$}
The PRM is rewarded for accurate classifications. It receives $+1$ for correct classifications ($y=y'$) and incurs a penalty of $-1$ for incorrect classifications ($y \neq y'$).
\begin{equation}
    r_R(y, y') = \begin{cases} +1 & \text{if } y=y' \\ -1 & \text{if } y\neq y' \end{cases}
\end{equation}

\paragraph{Reward Function for $G_\theta$}
The Generator receives rewards that incentivize generation of deceptive errors. It is rewarded $+1$ for generating an incorrect reasoning step that successfully deceives $R$ into classifying it as correct ($y=0 \text{ and } y'=1$). It receives a reward of $-1$ for failing to corrupt a step (generating a correct step, $y=1$) and $0$ for failing to decieve $R_\phi$ when it generated an incorrect step ($y=0 \text{ and } y'=0$).
\begin{equation}
    r_G(y, y') = \begin{cases} +1 & \text{if } y=0 \text{ and } y'=1 \text{ (G generates incorrect, fools } R \text{)} \\ 0 & \text{if } y=0 \text{ and } y'=0 \text{ (G generates incorrect, but is caught by } R \text{)} \\ -1 & \text{if } y=1 \text{ (G produces correct step)} \end{cases}
\end{equation}

Note that since the reward/payoff functions of the two players are asymmetric (don't add up to the same value in every case), the game is formulated as a general-sum game. 
\subsection{Game Formulation and Equilibrium Concept}
\label{sec:game_formulation_eq}
\shortname{} formulates PRM training as a two-player, non-cooperative game, general-sum. The natural solution concept for this adversarial interaction is a Nash Equilibrium~\citep{Nash1951}.

\paragraph{Utility Functions.} To ensure good optimization behavior (discussed in Section~\ref{sec:convergence}), our players optimize regularized utility functions. Both $G_\theta$'s and $R_\phi$'s utility is augmented with a Kullback-Leibler (KL) divergence penalty and an entropy bonus that help stabilize learning. 
These regularized utilities that define the precise objectives of the game, are defined as:
\begin{equation}
\begin{split}
U'_G(\pi_{\theta}, \pi_{\phi}) &= U_G(\pi_{\theta}, \pi_{\phi})  - \tau \cdot \text{KL}(\pi_{\theta} || \pi_{\theta_0}) + c_H \cdot H(\pi_{\theta}) \\
U'_R(\pi_{\theta}, \pi_{\phi}) &= U_R(\pi_{\theta}, \pi_{\phi})  - \tau \cdot \text{KL}(\pi_{\phi} || \pi_{\phi_0})+ c_H \cdot H(\pi_{\phi})
\end{split}
\label{eq:utility_reg}
\end{equation}
Where $U_G(\pi_{\theta}, \pi_{\phi}) \text{ and } U_R(\pi_{\theta}, \pi_{\phi})$ are player utilities defined as expected payoffs(rewards) in Section~\ref{sec:notations_full_reference}.  $\tau, c_H  > 0$ are coefficients for KL-regularization and entropy bonuses, respectively. $\pi_{\theta_0}$ and $\pi_{\phi_0}$  are the policies of the initial pretrained models. See Appendix~\ref{app:mode-collapse} for a discussion on mode collapse avoidance using these regularization.

\paragraph{Theorem 1 (Existence of Nash Equilibrium)}
The adversarial game, characterized by agents $G_{\theta}$ and $R_{\phi}$, their regularized utility functions ($U'_G$ and $U'_R$), and action spaces ($\mathcal{A}_G$ and $\mathcal{A}_R$), is guaranteed to possess at least one Nash Equilibrium in mixed strategies.

\textbf{Proof}: We invoke \textsl{Glicksberg's generalization of Nash's existence theorem}~\citep{glicksberg1952further}. The theorem guarantees the existence of a Nash Equilibrium in a game if two conditions
are met:
\begin{enumerate}[leftmargin=*]
    \item \textsl{The players' strategy spaces are non-empty, convex, and compact}: Since the action spaces $\mathcal{A}_G, \mathcal{A}_R$  are finite and discrete, the corresponding policy spaces $\Delta(\mathcal{A}_G), \Delta(\mathcal{A}_R)$ are non-empty, convex, and compact (finite-dimensional simplices). \footnote{The convexity referred to in Glicksberg’s theorem is policy-space convexity (i.e. convexity of distributions over actions). This is different from convexity in the parameter space of a neural model (weights $\theta,\phi$), which is typically non-convex. Existence of an equilibrium in policy space does not imply that there is a single weight vector achieving that equilibrium, or that gradient-based training will find it. Nor do we claim in this paper that we are able to find the Nash equilibrium.}
    \item \textsl{The players' utility functions are continuous in all players' strategies}: The utility functions $U'_G(\pi_\theta,\pi_\phi)$ and $U'_R(\pi_\theta,\pi_\phi)$ are expectations of bounded rewards under mixed (policy) distributions over actions. Because these expected‐reward expressions are finite linear combinations (or integrals) of policy probabilities, they(including regularization) are continuous in the policy distributions $\pi_\theta$ and $\pi_\phi$.
\end{enumerate}
Given these conditions, Glicksberg's theorem ensures the existence of at least one Nash Equilibrium.

\paragraph{Nash Equilibrium as a Stationary Point.}
In a general-sum, non-cooperative games where the utilities are differentiable w.r.t policy parameter, a Nash Equilibrium is characterized as a \textsl{stationary point} in the joint strategy space~\citep{stationary1, stationary2}. 
At such a point, no agent has a local incentive to unilaterally change their strategy. For \shortname{}, this equilibrium signifies a state where $G_{\theta}$'s evolving ability to corrupt is optimally countered by $R_{\phi}$'s ability to detect those corruptions (given their capacities and the training data).

A \textsl{stationary point} is qualitatively different from the \textsl{saddle point} solution that frequently arises in works like Generative Adversarial Networks \citep{gan} and robust optimization \citep{madry2018towards}. These problems are formulated as zero-sum games where the equilibrium corresponds to a saddle point of a single objective function optimized by both players in perfectly opposing directions, contrary to our general sum formulation.

\subsection{Convergence to Nash Equilibrium}
\label{sec:convergence}
In this section, we justify the introduction of regularization into the utility functions. The game described by the non-regularized utility functions $U_G\text{ and }U_R$ has non-monotone dynamics because the utilities are not simply negative correlated. In non-monotone settings, standard gradient methods often exhibit undesirable behaviors such as cycling, chaotic dynamics, or divergence, thus precluding direct convergence guarantees \citep{gidel2018variational, fei2021provably}. To mitigate these stability challenges and ensure provable convergence, we incorporate symmetric policy regularization into the player utility functions, as defined in Equation~\ref{eq:utility_reg}. Past works \citep{azar2024general, munos2024nash} prove that this regularization induces strong concavity in individual player objectives, which leads to a strong monotonicity and hence linear convergence to Nash Equilibrium.

\textbf{Lemma 1 (Strong Concavity from Regularization):} For any utility function $U(\pi)$ and a fixed reference policy $\pi_0$, the regularized objective $U'(\pi) = U(\pi) - \tau \cdot \text{KL}(\pi || \pi_0)$ is $\tau$-strongly concave with respect to the policy $\pi$. Similarly, an entropy bonus $\tau \cdot H(\pi)$ induces $\tau$-strong concavity~\citep{azar2024general}.

Applying the lemma
, and redefining the utility functions as given by Equation~\ref{eq:utility_reg}, 
we ensure that, $U'_G(\pi_{\theta}, \pi_{\phi})$ and $U'_R(\pi_{\theta}, \pi_{\phi})$ are $\tau$ and $c_H$-strongly concave with respect to the player policies.
If each player's utility function is strongly concave with respect to its own policy, for any fixed opponent policy, then the operator $F(z) = (-\nabla(U_\theta(\pi_\theta, \pi_\phi)), -\nabla(U_\phi(\pi_\theta, \pi_\phi)))$, where $z =(\theta, \phi)$, becomes strongly monotone \citep{daskalakis2018gans}. Furthermore, for games characterized by a strongly monotone operator, game-aware first-order optimizers offer strong convergence guarantees, specifically ensuring linear convergence rates.

\textbf{Theorem 2 (Linear Convergence to Nash Equilibrium):} For the strongly monotone game described by the regularized utility functions (Section~\ref{sec:game_formulation_eq}) algorithms such as Optimistic Gradient Descent-Ascent (OGDA) or Extra-Gradient \citep{ogda1, ogda2} are guaranteed to generate a sequence of iterates $z_t$ that converges to the unique Nash Equilibrium $z^*$ at a linear rate. (For proof, see Appendix~\ref{app:thm2}).

\subsection{Training Algorithm}
\label{sec:training_algorithm}

Both $G_\theta$ and $R_\phi$ are trained using PPO~\citep{ppo} augmented with Optimistic Gradient Descent-Ascent (OGDA) updates (see Section~\ref{sec:convergence}). 

\paragraph{Loss Function.}
For each player, we augment the PPO clipped surrogate objective with KL and Entropy regularization terms. For instance, $G_\theta$'s the loss function (similarly for $R_\phi$) is given by:
\begin{equation}
    L_G^\text{total}(\theta) = L_G^{\text{PPO,clip}}(\theta) + L_G^{\text{KL}}(\theta) + L_G^H(\theta)\\
\end{equation}
Where minimizing $L^{PPO}$ approximately maximizes the un-regularized utility functions $U_G, U_R$ defined in Section \ref{sec:notations_full_reference}. $L^{\text{KL}}$ given by $\tau \cdot \mathbb{E}_t \left[ \text{KL}(\pi_{\theta} || \pi_{\theta_0})) \right]$ and $L^H$ given by $-c_H \cdot \mathbb{E}_t \left[ \text{H}(\pi_{\theta})\right]$ correspond to the regularization introduced in Section \ref{sec:convergence}.

\paragraph{Game-Aware Optimization. }
We replace the standard PPO's gradient updates by OGDA  \citep{ogda1, ogda2}, which efficiently navigates the rotational game dynamics, in the regularized strongly monotone setting, by incorporating past gradient information. The update rule for OGDA is given by:
\begin{equation}
    z_{t+1} = z_t - \eta (2F(z_t) - F(z_{t-1}))
\end{equation}
where, $\eta$ is the learning rate, and $F(z_t)$ is is the gradient function w.r.t. the player's policy parameters, as described in section~\ref{sec:convergence}.  


\section{Experiments}
\label{sec:experiments}


\begin{table*}[!t]
  \centering
  \caption{Performance comparison of \shortname{} with the best MCTS based PRM training and prompting methods on math reasoning benchmarks. Please see Table \ref{tab:math-full} for a full comparison with other PRM training baselines.}
  \label{tab:math_performance}
\resizebox{\linewidth}{!}{
  \begin{tabular*}{\linewidth}{@{\extracolsep{\fill}}l*{6}{S}}
    \toprule
    \textbf{Method}
    & \textbf{MATH500}
    & \textbf{JEEB.}
    & \textbf{OlympiadB.}
    & \textbf{AIME25}
    & \textbf{AMC} 
    & \textbf{Avg.}\\
    \midrule
     \textbf{\texttt{GPT-OSS-120B}}\\
   \hphantom{xxxx}CoT-SC              & 84.4 & 62.8 & 83.9 & 85.5 & 60.0 & 77.2 \\
    \hphantom{xxxx}LLM-J      & 85.2 & 66.0 & 83.1 & 90.4 & 80.0 & 79.3\\
    \hphantom{xxxx}ToT          & 87.6 & 66.4 & 84.3 & 89.2 & 66.7 & 79.9 \\
    \hphantom{xxxx}ReST-MCTS        & \textbf{91.6} & 69.1 & 84.3 & 91.0 & 67.4 & 81.7 \\ 
    \hphantom{xxxx}Hard Neg. & 91.2 & 69.5 & 79.2 &  91.5 & 68.5 & 80.0\\
    \hphantom{xxxx}ARM & 89.2 & 62.4 & 77.6 & 87.9 & 63.3 & 76.1\\
    \rowcolor{mycolor!20}
    \hphantom{xxxx}\textbf{\shortname{}} (Ours) & 91.4 & \textbf{70.3} & \textbf{89.4} & \textbf{94.5} & \textbf{70.7} & \textbf{83.0}\\
    \midrule
    \textbf{\texttt{GPT-OSS-20B}}\\
    \hphantom{xxxx}CoT-SC              & 78.0 & 48.4 & 61.0 & 69.9 & 33.3 & 62.2 \\
    \hphantom{xxxx}LLM-J      & 85.2 & 68.0 & 83.9 & 92.8 & 60.0 & 80.3 \\
    \hphantom{xxxx}ToT          & 87.6 & 67.6 & 85.6 & 90.4 & 66.7 & 80.7 \\
    \hphantom{xxxx}ReST-MCTS        & \textbf{91.0} & 68.3 & 84.7 & 91.3 & 66.0 & 82.3 \\ 
    \rowcolor{mycolor!20}
    \hphantom{xxxx}\textbf{\shortname{}} (Ours) & \textbf{91.0} & \textbf{73.0} &\textbf{ 90.4} & \textbf{93.6} & \textbf{68.0} & \textbf{85.0} \\
    \midrule
    \textbf{\texttt{Gemma-3-27B}}\\
    \hphantom{xxxx}CoT-SC              & 78.8 & 48.8 & 48.7 & 69.9 & 13.3 & 58.4 \\
    \hphantom{xxxx}LLM-J      & 86.4 & 69.2 & 83.1 & 91.6 & 73.3 & 80.4 \\
    \hphantom{xxxx}ToT          & 88.0 & 67.6 & 85.2 & 88.0 & 60.0 & 80.2 \\
    \hphantom{xxxx}ReST-MCTS        & \textbf{93.2} & 70.3 & 84.3 & 90.6 & \textbf{61.2} & 72.6 \\ 
    \hphantom{xxxx}Hard Neg. & 86.0 & 63.0 & 83.0 & 89.1 & 60.0 & 76.2\\
    \hphantom{xxxx}ARM& 86.0 & 51.0 & 61.0 & 71.4 & 30.0& 59.9\\
    \rowcolor{mycolor!20}
    \hphantom{xxxx}\textbf{\shortname{}} (Ours)& 91.4 & \textbf{74.4} & \textbf{90.7} & \textbf{91.9} & 60.8 & \textbf{85.2} \\
    \midrule
    \textbf{\texttt{Gemma-3-12B}}\\
    \hphantom{xxxx}CoT-SC              & 76.8 & 43.2 & 35.6 & 53.0 & 23.3 & 51.4 \\
    \hphantom{xxxx}LLM-J      & 72.8 & 40.4 & 37.7 & 55.4 & 20.0 & 49.9 \\
    \hphantom{xxxx}ToT          & 77.6 & 49.2 & 40.7 & 61.4 & 20.0 & 55.3 \\
    \hphantom{xxxx}ReST-MCTS        & \textbf{81.4} & 50.7 & 40.3 & 62.6 & 20.2 & 56.9 \\ 
    \rowcolor{mycolor!20}
    \hphantom{xxxx}\textbf{\shortname{}} (Ours) & 80.0 & \textbf{53.1} & \textbf{44.0} &\textbf{ 65.7 }& \textbf{21.6} & \textbf{58.6} \\
    \bottomrule
  \end{tabular*}} 
  \vspace{-1em}
\end{table*}

\subsection{Experimental Setting}

\textbf{Model Architecture  } Both the Generator ($G_\theta$) and the Process Reward Model ($R_\phi$) use \texttt{Llama-3.1-8B} as the backbone. This model offers a practical trade-off between computational cost and capacity for mathematical reasoning.


\textbf{Training  } We train on the train split of MATH dataset\citep{MATH}. The dataset provides with gold solution steps, the generator $G_\theta$ learns to perturb these solution steps. We alternate training between players, updating one for 5 gradient update steps while freezing the other. We use PPO \citep{ppo} with Optimistic Gradient Descent–Ascent (OGDA) \citep{ogda1} as discussed in Section~\ref{sec:training_algorithm}. $R_\phi$ is trained on a mixture of gold steps (50\%) and negatives drawn from both current and past versions of $G_\theta$ (50\%) to prevent forgetting. Hyperparameter details are given in the App~\ref{app:hyperparam}.


\textbf{Baselines } We compare against methods with comparable test time compute, which includes a) prompting-based methods b) trained reward models. Prompting based methods include CoT with self-consistency (k=5) \citep{self-consistency}, Tree of Thoughts (branching factor 5) \citep{treeofthought} and LLM-as-a-Judge~\citep{llmaj} using same model as solver and as judge sampling a maximum of 5 times. Trained reward models include outcome-based approximations methods - AutoPSV \citep{AutoPSV}, trained PRM using static datasets - Let's Verify Step by Step \citep{verifystepbystep}, and MCTS based data generation techniques to train PRMs - Math-Shepherd \citep{Math-Shepherd}, rStar-Math \citep{guan2025rstar} and ReST-MCTS \citep{zhang2024rest}. We re-implement all the methods using \texttt{Llama-3.1-8B} as a base model. Table \ref{tab:math_performance} gives performance comparison with prompting and the best trained PRM baseline. A full comparison with all the baselines is given in Table \ref{tab:math-full}.

\textbf{Test Setup  } We evaluate on five math reasoning benchmarks: MATH500~\citep{MATH} (in-domain, in-distribution), JEEBench~\citep{jeebench}, OlympiadBench~\citep{OlympiadBench}, AIME25~\citep{aime}, and AMC~\citep{aimo_aimo-validation-amc_2024} (in-domain, out-of-distribution). AIME25 contains recent examination questions, ensuring no data contamination. The effectiveness of trained PRMs is assessed by evaluating the performance of a solver model supervised by the PRM during test time. We sample a maximum of 5 times per step. We experiment on GPT and Gemma model families with different sizes (GPT-OSS-120B, GPT-OSS-20B, Gemma-3-12B-it and Gemma-3-27B-it) using API based inference. See App ~\ref{app:pareto-frontier} for compute details.
\subsection{Experimental Results}


\begin{table*}[!t]
  \centering
  \setlength{\tabcolsep}{3pt}
  \caption{Performance comparison \shortname{} with baselines on SciBench. 
  Domains: atk=Atkins, cal=Calculus, cmc=ChemMC, cls=Classical, dif=Differential Eq., 
  fun=Fundamentals, mat=Matter, qua=Quantum, sta=Statistics, the=Thermodynamics.}
  \label{tab:scibench_performance}
\resizebox{\linewidth}{!}{
  \begin{tabular*}{\linewidth}{@{\extracolsep{\fill}}l*{11}{c}}
    \toprule
    \textbf{Method} & \textbf{atk} & \textbf{cal} & \textbf{cmc} & \textbf{cls} & \textbf{dif} & 
    \textbf{fun} & \textbf{mat} & \textbf{qua} & \textbf{sta} & \textbf{the} & \textbf{Overall} \\
    
    \midrule
    
    \textbf{\texttt{GPT-OSS-120B}}\\
     \hspace{0.5em} ToT                & 58.3 & \textbf{96.2} & 43.5 & 51.6 & 74.1 & \textbf{37.5} & 46.4 & \textbf{50.0} & 89.1 & 61.0 & 61.4 \\
     \hspace{0.5em}  ReST-MCTS          & 58.3 & 88.5 & 43.5 & 52.1 & 74.1 & 35.0 & 50.0 & \textbf{50.0} & 91.3 & 70.7 & 62.3 \\
    \rowcolor{mycolor!20}
     \hspace{0.5em} \textbf{\shortname{}} (Ours) & \textbf{60.3} & 91.2 & \textbf{44.8 }& \textbf{52.4} & \textbf{76.9} & 36.6 & \textbf{51.5} & \textbf{50.0} & \textbf{92.5} & \textbf{73.1} & \textbf{64.0} \\
    \hline
    
    \textbf{\texttt{GPT-OSS-20B}}\\
        \hspace{0.5em} ToT      & 60.0 & \textbf{88.5} & 43.5 & 45.2 & 74.1 & 37.5 & 46.4 & 60.0 & 89.1 & 65.9 & 61.7 \\
    \hspace{0.5em} ReST-MCTS         & 61.3 & 86.2 & 43.5 & 44.5 & 71.5 & 37.5 & 50.0 & 60.0 & 89.1 & \textbf{66.2} & 61.8 \\
    \rowcolor{mycolor!20}
     \hspace{0.5em} \textbf{\shortname{}} (Ours) & \textbf{61.8} & 86.1 & \textbf{44.8} & \textbf{47.6} & \textbf{75.1} & \textbf{38.8} & \textbf{52.0} & \textbf{61.3} & \textbf{91.0} & 65.3 & \textbf{63.0} \\
    \hline

    \textbf{\texttt{Gemma-3-27B-IT}} \\
   \hspace{0.5em} ToT      & 60.0 & 92.3 & 43.5 & 48.4 & 77.8 & 30.0 & 50.0 & 50.0 & 89.1 & \textbf{63.4} & 61.1\\
   \hspace{0.5em} ReST-MCTS         & 60.0 & 92.2 & 43.5 & 48.4 & 77.8 & 31.5 & 57.1 & 50.0 & 89.1 & 63.2 & 61.8 \\
    \rowcolor{mycolor!20}
    \hspace{0.5em} \textbf{\shortname{}} (Ours) &\textbf{ 62.2} & \textbf{94.6} & \textbf{45.1} & \textbf{50.2} &\textbf{ 80.2} & \textbf{31.9} & \textbf{59.3} & \textbf{52.0} & \textbf{91.4} & \textbf{63.4} & \textbf{63.5} \\
    \midrule
    \textbf{\texttt{Gemma-3-12B-IT}}\\
   \hspace{0.5em} ToT          & 43.3 & \textbf{73.1} & 34.8 & \textbf{51.6} & \textbf{59.3} & \textbf{40.0} & 21.4 & \textbf{20.0} & \textbf{80.4} & \textbf{36.6} & \textbf{47.6} \\
    \hspace{0.5em} ReST-MCTS      & 43.3 & 69.2 & 34.8 & 35.5 & 44.4 & 32.5 & 25.0 & 10.0 & 76.1 & 26.8 & 41.8 \\
    \rowcolor{mycolor!20}
    \hspace{0.5em} \textbf{\shortname{}} (Ours)& \textbf{44.8} & 71.5 & \textbf{35.9} & 36.8 & 45.8 & 33.6 & \textbf{25.9} & 10.4 & 78.4 & 27.8 & 43.2 \\
    \bottomrule
  \end{tabular*}}
  \vspace{-0.5em}
\end{table*}

\paragraph{\shortname{} outperforms baselines}
Table~\ref{tab:math_performance}, shows the performance LLM solvers guided by our PRM against prior methods under equivalent test-time compute. Averaged over five benchmarks and four solver models, \shortname{} improves solver accuracy by $+3.4$ pp relative to the strongest trained PRM baseline (ReST-MCTS \citep{zhang2024rest}) and by $+4.2$ pp relative to the best prompting method. We observe the largest gains on JeeBench, which is an out-of-distribution benchmark, where \shortname{} outperforms ReST-MCTS by $+5.3$ pp, suggesting better generalization beyond the training distribution.
 
\vspace{-1em}
\paragraph{\shortname{} remains effective as with solver scales}
We evaluate whether \shortname{} remains effective as the solver size increases. Table~\ref{tab:math_performance} shows that \shortname{}’s average improvement over prompt-based methods grows from $+3.3$ pp with Gemma-3-12B, to $+4.3$ pp with GPT-OSS-20B, and further to $+5.0$ pp with Gemma-3-27B. In contrast, PRMs trained on non-adaptive data (e.g. ReST-MCTS) show gains that are smaller and in some cases decline as solver size increases, dropping from $+1.6$ pp on Gemma-3-12B to $-7.6$ pp on Gemma-3-27B. These results support our hypothesis that adversarial training enables robust PRMs that remain effective even as solver models become stronger.
\vspace{-1em}

\paragraph{\shortname{} generalizes across domains}
To test whether our adversarial training captures general error patterns rather than overfitting to math, we evaluate on SciBench~\citep{scibench}, comparing \shortname{} with the best prompting and PRM baselines (Tab.~\ref{tab:scibench_performance}). On GPT-OSS-20B, \shortname{} reaches $63.0\%$ accuracy versus $61.8\%$ for ReST-MCTS and $61.1\%$ for ToT. On GPT-OSS-120B, it achieves $64.0\%$, compared to $62.3\%$ and $61.4\%$. This shows that \shortname{} learns transferable properties of incorrect reasoning.

\paragraph{\shortname{} provides strong reward signal for RL Post-Training}
To assess \shortname{}'s utility beyond inference-time search, we evaluate its effectiveness as a reward signal for RL post-training. We fine-tune \texttt{Gemma-3-12B} on the MATH dataset using GRPO, comparing \shortname{} against standard outcome supervision and the strongest PRM baseline (ReST-MCTS). As shown in Table~\ref{tab:rl_results}, \shortname{} provides a significantly stronger learning signal, outperforming outcome-based RL by \textbf{+6.8 pp} on MATH500 and demonstrating superior generalization on OOD benchmarks like JEEBench (\textbf{+10.4 pp}). Training details are provided in Appendix~\ref{app:hyperparam}.
\begin{table}[h]
    \centering
    \caption{\textbf{RL Post-Training Performance.} Accuracy (\%) of \texttt{Gemma-3-12B} fine-tuned using different reward signals. APRM serves as a more effective dense reward than baselines.}
    \label{tab:rl_results}
    \resizebox{0.9\textwidth}{!}{
    \begin{tabular}{lccccc}
        \toprule
        \textbf{Method} & \textbf{MATH500} & \textbf{JEEB.} & \textbf{OlympiadB.} & \textbf{AIME25} & \textbf{AMC} \\
        \midrule
        Base Model (CoT) & 76.8 & 43.2 & 35.6 & 53.0 & 23.3 \\
        Final Correct (Outcome) & 80.0 & 48.2 & 40.0 & 59.8 & 21.0 \\
        ReST-MCTS PRM & 83.0 & 51.6 & 43.2 & 61.4 & 23.3 \\
        \textbf{APRM (Ours)} & \textbf{86.8} & \textbf{58.6} & \textbf{48.0} & \textbf{65.7} & \textbf{25.3} \\
        \bottomrule
    \end{tabular}
    }
\end{table}



\paragraph{Performance on PRM benchmarks}
We evaluate whether the predictions of \shortname{} align with human annotations by comparing it’s performance with the baselines on a subset of the PRM800K test set \citep{prm800k}. As shown in Figure~\ref{fig:rewards_and_metrics}, we report accuracy, precision, recall, and F1 score for all methods. We find that \shortname{} achieves the highest performance across all metrics, indicating stronger agreement with human-anotations compared to baselines.

\setlength{\intextsep}{5pt}   
\setlength{\columnsep}{10pt}  
\begin{wrapfigure}{r}{0.5\columnwidth}  
  \vspace{-5pt}
  \centering
  \includegraphics[width=\linewidth]{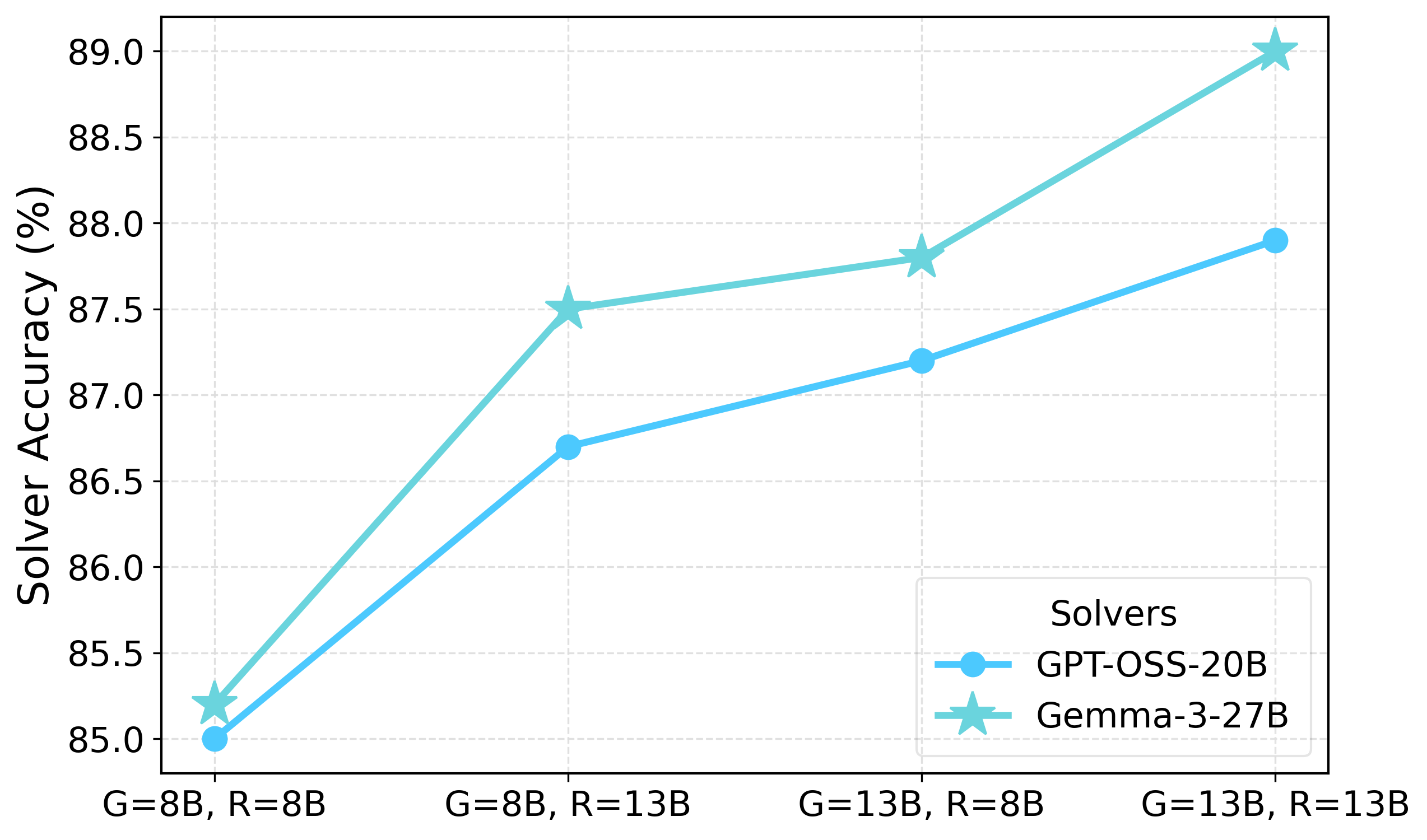}
  \vspace{-2em}
  \caption{\shortname{} augmented solver accuracy averaged across math reasoning benchmarks for different $R_\phi$ and $G_\theta$ backbones.}
  \label{fig:scaling}
  \vspace{-10pt}
\end{wrapfigure}

\paragraph{Scaling properties of \shortname{}} We study how \shortname{} scales with the backbone sizes of $G_\theta$ and $R_\phi$, using Llama-3.1-8B and Llama-2-13B in four settings: (1) $G$=8B, $R$=8B; (2) $G$=8B, $R$=13B; (3) $G$=13B, $R$=8B; and (4) $G$=13B, $R$=13B. Figure~\ref{fig:scaling} reports solver accuracy averaged over all math benchmarks, showing monotonic gains from (1) to (4) with an overall improvement of $\sim 3.8$ pp. We observe that gains are more when increasing the generator's capacity compared to increasing the reward model. We hypothesize that larger generators exposes the PRM to broader distribution reasoning errors.




\begin{wraptable}{r}{0.6\linewidth}
\centering
\setlength{\tabcolsep}{3pt}
\caption{Ablation study on \texttt{GPT-OSS-20B}. Accuracy (\%) on representative benchmarks.}
\label{tab:ablation}
\begin{tabular}{lcccc}
\toprule
\textbf{Variant} & \textbf{MATH} & \textbf{JEE} & \textbf{Oly} & \textbf{Overall} \\
\midrule
\shortname{} (full)          & \textbf{91.0} & \textbf{73.0} & \textbf{90.0} & \textbf{85.2} \\
    \hspace{1em}– no entropy reg.& 86.4 & 69.4 & 85.5 & 80.9 \\
    \hspace{1em}– no OGDA        & 84.6 & 67.9 & 83.7 & 79.2 \\
    \hspace{1em}– both removed   & 83.6 & 67.2 & 82.8 & 78.4 \\
\bottomrule
\end{tabular}
\end{wraptable}
\paragraph{Ablation Study}
We ablate on entropy regularization and the OGDA optimizer. Table~\ref{tab:ablation} shows results on MATH, JEE, and OlympiadBench using \texttt{GPT-OSS-20B}. Removing entropy regularization reduces accuracy by about 4 points, while removing OGDA has a similar effect. This complies with the theory, that each component helps stabilize training.

\subsection{Qualitative Analysis}
\label{sec:qualitative_analysis}

\paragraph{Evolution of Generator perturbations.} 
We first analyze how the Generator’s perturbations evolve during training. In the die-rolling problem (Fig.~\ref{fig:qualitative-a}), $G$ initially perturbs the step to make a shallow mistake of as miscounting the number of odd faces. By mid-training, it produces steps that look locally consistent but misinterpret the problem statement, e.g., doubling all faces without considering conditional rules. At later stages, it introduces subtle logical errors, like averaging probabilities instead of taking a weighted average. This trajectory suggests that adversarial interaction trains $G$ to generate progressively harder negatives for $R$.

\paragraph{\shortname{} with different solver sizes.} 
We next examine \shortname{}-trained PRMs when the solver capacity increases. In the shortest-path problem (Fig.~\ref{fig:qualitaitve-b}), GPT-oss-20B uses a non-coherent languages and produces a confusing and incorrect coordinate setup, while GPT-oss-120B generates a more coherent geometric reasoning but still mislocates the position of gecko. In both cases, \shortname{} flags the faulty reasoning step, showing that adversarial training makes the PRM robust to errors even when the text is fluent.

\paragraph{Cross-domain transfer.} 
We test whether \shortname{} captures generalizable properties of flawed reasoning by applying it to a chemistry problem from SciBench (Fig.~\ref{fig:qualitative-c}). In this case, the solver neglected to convert centimeters to meters before applying $\Pi = \rho g h$. \shortname{}-trained PRM flagged this unit error, which lead to correct final answer. This example suggests that adversarially generated negatives are not tied to math-specific syntax errors but can transfer to scientific reasoning tasks.

\paragraph{Comparison with PRM baselines.} 
We further compare \shortname{} with ReST-MCTS on a calculus reasoning problem (Fig.~\ref{fig:qualitative-d}). The problem asks to calculate the value of an expression at $x=1$, where the numerator and denominator approach to zero. The solver factorizes and cancels the common factor and computes the value. This step appears valid, and even produces the correct final answer. However, this cancellation is invalid at $x=1$ point, because it amounts to dividing by zero. ReST-MCTS fails to flag this subtle mistake, whereas, \shortname{} flags this. This example highlights how adversarial training exposes PRMs to errors that lead to correct final answer, which static rollouts are not able to.

\paragraph{Failure Cases.} 
Finally, we analyze the errors missed by \shortname{}. We find that one recurring failure mode occurs when solvers invoke theorems without checking preconditions. For instance, in the series convergence problem (Fig.~\ref{fig:qualitaitve-e}), GPT-OSS-120B applied the Alternating Series Test to a sequence that does not alternate term-by-term, and the PRM accepted it. Similarly, in an inequality problem, the solver applied Jensen’s inequality outside its domain of concavity, and \shortname{} again failed to flag the mistakes. We hypothesize that $G$ struggles to generate adversarial negatives that specifically target theorem preconditions, this might be a limitation due to the size of the backbone model used.

\section{Related Work}

\paragraph{Mathematical Reasoning with LLMs}  


LLM reasoning has advanced through chain-of-thought prompting~\citep{CoT}, self-consistency~\citep{self-consistency}, and outcome-level reinforcement learning~\citep{deepseekr1, OpenAIo1SystemCard2024}. Outcome-based reward models~\citep{orm1, orm2} give sparse feedback~\citep{prime} and can reinforce faulty reasoning. Process Reward Models (PRMs)~\citep{omegaPRM, prime, GLoRE, verifystepbystep, AutoPSV, pqm} address this by supervising intermediate steps, but require costly human annotations. Synthetic data generation methods~\citep{Math-Shepherd, zhang2024rest, guan2025rstar, sun2025freeprmtrainingprocessreward, du2025mmprmenhancingmultimodalmathematical, li2025fgprmfinegrainedhallucinationdetection} generate synthetic step-level data, yet rely on final-answer correctness, which fails when wrong steps yield right answers.   
\shortname{} instead trains a generator adversarially to produce hard, plausibly incorrect steps that target the PRM’s weaknesses. This yields a dynamic curriculum of negatives, continuously adapting to the PRM’s blind spots rather than relying on fixed heuristics.

\paragraph{Adversarial Training and Robustness}
Adversarial training was introduced with GANs~\citep{gan}, where a generator learns a target distribution byto fooling a discriminator, and later extended to robust optimization~\citep{madry2018towards} and GAIL~\citep{ho2016generativeadversarialimitationlearning} for policy imitation.  
\shortname{} applies this idea to LLM reasoning steps, which are discrete and require a different setup. Unlike GANs or robust optimization that use zero-sum, single-objective formulations~\citep{madry2018towards, gan}, \shortname{} defines a general-sum game with non-opposing utilities for the Generator and Reward Model. This yields equilibria that are stationary points of the game dynamics (Nash equilibria) rather than simple minimax solutions. \shortname{} also connects to self-play in multi-agent RL~\citep{baker2020emergenttoolusemultiagent} and recent advances in optimization for adversarial learning~\citep{ogda1, ogda2}. Further, more recently, \citet{advrm} applied adversarial training to outcome reward models using ensemble disagreement. In contrast, \shortname{} targets process supervision and evolves the generator and PRM in tandem, compared to utilizing a static generator.
We defer a detailed discussion to Appendix~\ref{app:related_work}.

\section{Conclusion, Limitations and Future Work}

In this work, we introduced \shortname{}, an adversarial framework to improve process supervision for LLM reasoning. Unlike prior approaches that rely on outcome-based feedback, static human-annotated datasets, or heuristic search with MCTS, \shortname{} actively optimizes for hard negatives to train the PRM. This adversarial setup exposes the PRM to subtle reasoning errors making it robust and generalizable across domains.


A limitation of our method is the computational cost. While inference-time compute is unchanged, training requires additional resources due to the adversarial optimization between generator and PRM. This cost is incurred only once, but remains higher than approaches based solely on static datasets.


As the availability of high-quality training data for large models plateaus, adversarial training offers a way to generate targeted supervision, specifically focusing on model weaknesses, where data collection is expensive or impractical. This makes the approach relevant beyond mathematical reasoning, with potential applications in scientific domains, robotics, and other areas where annotation is difficult to scale.


Future work includes extending the framework beyond the two-player formulation. Multi-agent variants could introduce diverse adversarial roles that target different error modes. Another direction is applying the approach to embodied agents and robotics, where agents must learn to operate in noisy environments induced by other agent's adversarial interactions. On the optimization side, although our analysis established convergence under OGDA with regularization, further work is needed to understand large-scale training dynamics and to develop algorithms that enhance stability.

\newpage
\bibliography{iclr2026_conference}
\bibliographystyle{iclr2026_conference}
\newpage
\section{Appendix}
\subsection{LLM Aid for Writing}

We use LLMs only for making the writing concise. We provide the LLM with the text and ask it not to change any information, just make it concise. We the manually verify the text and make changes to it. Few examples are given below: 
\begin{tcolorbox}[
  enhanced,
  fonttitle=\bfseries,
  colback=mycolor!10,
  colframe=mycolor,
  arc=2mm,
  boxrule=0.8pt,
  sidebyside,
  sidebyside align=top,
  bicolor,
  colbacklower=mycolor!10,
  subtitle style={top=0pt,bottom=2pt}, 
  subtitle style={fontupper=\color{black}}
]
  \tcbsubtitle{Original}
  A future work is to extend the adversarial formulation to other game-theory settings. In this paper, we focused on a two-player setup between a generator and a reward model, but multi-agent extensions could allow for diverse adversarial roles that challenge models along different dimensions of reasoning. Additionally, while our analysis showed convergence under OGDA with regularization, more work is needed to fully understand stability in large-scale adversarial training for language models. Exploring optimization techniques that reduce training instability while retaining the benefits of adversarial training is an important next step.

  \tcblower
  \tcbsubtitle{LLM + Human Polishing}
  Future work includes extending the framework beyond the two-player formulation. Multi-agent variants could introduce diverse adversarial roles that probe different aspects of reasoning. On the optimization side, while our analysis demonstrated convergence with OGDA and regularization, further study is needed to understand stability at scale and to design algorithms that improve training efficiency while preserving the benefits of adaptive curricula.

\end{tcolorbox}

Another example: 

\begin{tcolorbox}[
  enhanced,
  fonttitle=\bfseries,
  colback=mycolor!10,
  colframe=mycolor,
  arc=2mm,
  boxrule=0.8pt,
  sidebyside,
  sidebyside align=top,
  bicolor,
  colbacklower=mycolor!10,
  subtitle style={fontupper=\color{black}},  
]

  \tcbsubtitle{Original}
    To access the effectiveness of our framework, we evaluate the accuracy of LLM solvers guided by our trained PRM against the strongest existing methods (See Table \ref{tab:math_performance}). Averaged across five  math-reasoning benchmarks, under equivalent test-time compute budgets,
    \shortname{} improves solver accuracy by ... over the strongest trained PRM baseline (ReST-MCTS) and by ... over best prompting methods (Tree-of-Thought).  We observe the highest gains on OlympiadBench, which is an out-of-distribution (OOD) benchmark for the PRM. Here, \shortname{} yields a substantial ... pp improvement over the best baseline (ReST-MCTS), which strongly indicates enhanced generalization and reduced overfitting.

  \tcblower
  \tcbsubtitle{LLM + Human Polishing}
  Table~\ref{tab:math_performance}, shows the performance LLM solvers guided by our PRM against prior methods under equivalent test-time compute. Averaged over five benchmarks, \shortname{} improves solver accuracy by $+x$ pp relative to the strongest trained PRM baseline (ReST-MCTS \cite{zhang2024rest}) and by $+x$ pp relative to the best prompting method (Tree-of-Thought \cite{treeofthought}). We observe the largest gains on \textit{OlympiadBench}, which is an out-of-distribution benchmark, where \shortname{} outperforms ReST-MCTS by $+x$ pp, suggesting better generalization beyond the training distribution.

\end{tcolorbox}

\subsection{Evolution of Generator Perturbations}
\begin{tcolorbox}[
  enhanced, breakable,
  colback=mycolor!10, colframe=mycolor,
  title= {\textbf{Question}: A die, with the numbers 1, 2, 3, 4, 6, and 8 on its six faces, is rolled. After this roll, if an odd number appears on the top face, all odd numbers on the die are doubled. If an even number appears on the top face, all the even numbers are halved. If the given die changes in this way, what is the probability that a 2 will appear on the second roll of the die?},
  coltitle=black,
  segmentation style={dashed} 
]

\textbf{Correct Step}: The die has six faces. There are two odd numbers {1, 3} and four even numbers {2, 4, 6, 8}. Therefore, the probability of rolling an odd number is 2/6 = 1/3, and the probability of rolling an even number is 4/6 = 2/3.\\
\tcbline
\textbf{t = 500}: The die has six faces. There are \colorbox{orange!35!white}{three odd} numbers {1, 3} and \colorbox{orange!35!white}{three even} numbers {2, 4, 6, 8}. Therefore, the probability of rolling an odd number is \colorbox{orange!35!white}{3/6 = 1/2}, and the probability of rolling an even number is \colorbox{orange!35!white}{3/6 = 1/2}.\\

\textbf{Comment}: Trivial mistake of counting number of odds wrong.
\tcbline
\textbf{t =  1000}: We first determine the two possible states of the die for the second roll. If an odd number (1 or 3) is rolled first, \colorbox{orange!35!white}{all numbers on the die} are doubled, resulting in a new die with faces {2, 4, 6, 8, 12, 16}.\\

\textbf{Comment}: wrongly interpreted question, seemingly correct step on it’s own but wrong in context of the problem.
\tcbline
\textbf{t =  3000}: We must find the probability of rolling a 2. There are two scenarios. First roll is odd (P=1/3): The die becomes {2, 2, 6, 4, 6, 8}. The probability of rolling a 2 here is 2/6. First roll is even (P=2/3): The die becomes {1, 1, 3, 2, 3, 4}. The probability of rolling a 2 here is 1/6. To find the total probability, \colorbox{orange!35!white}{we average} \colorbox{orange!35!white}{these two probabilities: (2/6 + 1/6) / 2 = (3/6) / 2 = 1/4.}\\

\textbf{Comment}: This error is deceptive because every calculation is correct, there is a conceptual error, probabilities are not averaged like this. Weighted average should be used here.

\label{fig:qualitative-a}
\end{tcolorbox}

\subsection{PRM across solver model sizes}

\begin{tcolorbox}[
  enhanced, breakable,
   colback=mycolor!10, colframe=mycolor,
  title= {\textbf{Question}: A gecko is in a room that is 12 feet long, 10 feet wide and 8 feet tall. The gecko is currently on a side wall ($10'$ by $8'$), one foot from the ceiling and one foot from the back wall ($12'$ by $8'$). The gecko spots a fly on the opposite side wall, one foot from the floor and one foot from the front wall. What is the length of the shortest path the gecko can take to reach the fly assuming that it does not jump and can only walk across the ceiling and the walls? Express your answer in simplest radical form.},
  coltitle=black,
  segmentation style={dashed} 
]
\textbf{GPT-OSS-20B}\\

We need shortest path across surfaces of a rectangular box. Geometric unfolding problem. Gecko on side wall: side wall dimensions 10' by 8'. Which side? They say side wall (10' by 8') one foot from ceiling and one foot from back wall (12' by 8'). So coordinate system: room dimensions: length 12 (x), width 10 (y), height 8 (z). Let's set x along length (front to back), y along width (left to right), z vertical. \colorbox{orange!35!white}{Back wall: plane x=12? Actually back wall is 12' by 8' (length x height). So back}
\colorbox{orange!35!white}{wall is at x=12, facing front. Good. }
\tcbline
\textbf{GPT-OSS-120B}\\

Step 1 – Set up a coordinate system

Let  

\begin{itemize}
    \item the \(x\)-axis run from the front wall \((x=0)\) to the back wall \((x=12)\);
    \item the \(y\)-axis run from the left side wall \((y=0)\) to the right side wall \((y=10)\);
    \item the \(z\)-axis run from the floor \((z=0)\) to the ceiling \((z=8)\).
\end{itemize}

\colorbox{orange!35!white}{The gecko starts on the left side wall \((y=0)\) one foot below the ceiling and one }
\colorbox{orange!35!white}{foot in front of the back wall:S=(x,y,z)=(11,0,7).}
\label{fig:qualitaitve-b}
\end{tcolorbox}
\subsection{Comparision to ReST-MCTS}
\begin{tcolorbox}[
enhanced, breakable,
 colback=mycolor!10, colframe=mycolor,
title={\textbf{Question:} Evaluate
\[
F(x) = \frac{x^5 - 1}{x - 1}
\]
at $x=1$.},
coltitle=black,
segmentation style={dashed}
]

Factor the numerator:
\colorbox{orange!35!white}{$
\frac{x^5 - 1}{x - 1} 
= \frac{(x-1)(x^4+x^3+x^2+x+1)}{x-1}
= x^4+x^3+x^2+x+1.
$}
Now substitute $x=1$:
\[
F(1) = 1^4+1^3+1^2+1+1 = 5.
\]

\textbf{Comment:} The cancellation step is invalid at $x=1$, because it amounts to dividing by zero. ReST-MCTS approves the solution but \shortname{}
Flags the error.
\label{fig:qualitative-c}

\end{tcolorbox}

\subsection{Domain Transfer}
\label{fig:qualitative-d}
\begin{tcolorbox}[
  enhanced, breakable,
  colback=mycolor!10, colframe=mycolor,
  title= {\textbf{Question}: The osmotic pressures of solutions of poly(vinyl chloride) (PVC) in cyclohexanone at 
$298 \ \mathrm{K}$ are given below. The pressures are expressed in terms of the heights 
of solution (of mass density $\rho = 0.980 \ \mathrm{g \, cm^{-3}}$) in balance with 
the osmotic pressure. Determine the molar mass of the polymer.

\[
\begin{array}{c|ccccc}
c \; (\mathrm{g \, dm^{-3}}) & 1.00 & 2.00 & 4.00 & 7.00 & 9.00 \\
\hline
h \; (\mathrm{cm})            & 0.28 & 0.71 & 2.01 & 5.10 & 8.00
\end{array}
\]},
  coltitle=black,
  segmentation style={dashed} 
]

Step 1 – Relate the measured height to the osmotic pressure

The height $h$ of the liquid column balances the osmotic pressure $\Pi$:
\[
\Pi = \rho g h
\]
\[
\rho = 0.980\;\text{g cm}^{-3}=980\;\text{kg m}^{-3},\qquad 
g = 9.81\;\text{m s}^{-2}
\]
\[
\rho g = 980 \times 9.81 \approx 9.61\times 10^{3}\;\text{Pa m}^{-1}
\]
\[
\colorbox{orange!35!white}{$\displaystyle 
\Pi \approx 9.61\times10^{3}\times h \ \text{Pa}
$}
\]

\begin{center}
\begin{tabular}{@{}cc@{}}
\toprule
$h$ (cm) & $\Pi$ (Pa) \\
\midrule
0.28 & 26.9 \\
0.71 & 68.2 \\
2.01 & 192.0 \\
5.10 & 489.8 \\
8.00 & 768.3 \\
\bottomrule
\end{tabular}
\end{center}
\end{tcolorbox}

\subsection{Failure Cases}
\begin{tcolorbox}[
  enhanced, breakable,
   colback=mycolor!10, colframe=mycolor,
  title= {\textbf{Question}: Determine the convergence of
    \[
    \sum_{n=1}^{\infty} \frac{(-1)^{\lfloor \sqrt{n}\rfloor}}{\sqrt{n}}.
    \]},
  coltitle=black,
  segmentation style={dashed} 
]


Let \(a_n=\frac{(-1)^{\lfloor \sqrt{n}\rfloor}}{\sqrt{n}}\).

Since \(1/\sqrt{n}\to 0\) and the terms alternate in sign, the Alternating Series Test implies convergence.
Moreover, grouping terms by blocks where \(\lfloor \sqrt{n}\rfloor\) is constant yields partial cancellations, so
\(\sum_{n=1}^{\infty} |a_n|=\sum_{n=1}^{\infty} \frac{1}{\sqrt{n}}\) is effectively controlled by these cancellations, implying absolute convergence.\\

\textbf{Comment} The sequence does not alternate term-by-term. The convergence is conditional.
\end{tcolorbox}

\begin{tcolorbox}[
  enhanced, breakable,
   colback=mycolor!10, colframe=mycolor,
  title= {\textbf{Question}: For \(a,b,c>0\) with \(a+b+c=3\), decide whether
\[
\frac{a}{1+a^2}+\frac{b}{1+b^2}+\frac{c}{1+c^2}\le \frac{3}{2}
\]
holds.},
  coltitle=black,
  segmentation style={dashed} 
]


Let \(f(x)=\frac{x}{1+x^2}\). Since $f(x)$ is concave,
by Jensen’s inequality,
\[
\frac{a}{1+a^2}+\frac{b}{1+b^2}+\frac{c}{1+c^2}
=\sum f(a)\le 3 f\!\left(\tfrac{a+b+c}{3}\right)=3f(1)=\frac{3}{2}.
\]\\

\textbf{Comment}
$f(x)$ is concave only on the interval $0,\sqrt{3}$. Jensen’s application is not guaranteed outside this interval and the proof is invalid even though the conclusion correct.
\label{fig:qualitaitve-e}
\end{tcolorbox}
\subsection{Correctness Oracle}
\label{app:oracle}


    




Given the gold step $s_i$ and a candidate step $s'_i$ produced by the generator,
the oracle computes a set of \emph{proximity filters} (ensure $s'_i$ is close to $s_i$ but not a trivial copy),
\emph{hard semantic validators} (mathematically disprove/verify correctness), \emph{structural validators}
(check symbol/number consistency), and optional \emph{execution checks}. The oracle outputs
$y(s'_i)\in\{0,1\}$ for \textsc{Incorrect} or \textsc{Correct} 

\medskip
\noindent\textbf{Proximity filters (all must pass).}
These constrain the candidate to be a plausible variant of $s_i$ (prevents degenerate far-off edits and trivial copies).
Let $\text{len}(\cdot)$ be token length, $\cos(\cdot,\cdot)$ a sentence embedding cosine similarity, and
$\mathrm{Lev}(\cdot,\cdot)$ a normalized Levenshtein distance in $[0,1]$.

\[
\begin{aligned}
&\textbf{(P1) Length window:} &&
\mathbb{1}_{\text{len}} \;=\; \mathbb{I}\!\left[\;\tfrac{1}{2}\,\text{len}(s_i) \;\le\; \text{len}(s'_i) \;\le\; 2\,\text{len}(s_i)\;\right] \\
&\textbf{(P2) Similarity band:} &&
\mathbb{1}_{\text{sim}} \;=\; \mathbb{I}\!\left[\, \tau_{\min} \le \cos(s'_i, s_i) \le \tau_{\max} \,\right],\quad (\tau_{\min}{=}0.5,\ \tau_{\max}{=}0.9) \\
&\textbf{(P3) Non-exactness:} &&
\mathbb{1}_{\text{neq}} \;=\; \mathbb{I}\!\left[\, s'_i \ne s_i \,\right] \\
&\textbf{(P4) Minimality:} &&
\mathbb{1}_{\text{min}} \;=\; \mathbb{I}\!\left[\, \mathrm{Lev}(s'_i,s_i) \le \delta_{\max} \,\right],\quad (\delta_{\max}{=}0.35)
\end{aligned}
\]
We require $\mathbb{1}_{\text{prox}} \triangleq \mathbb{1}_{\text{len}}\cdot \mathbb{1}_{\text{sim}}\cdot \mathbb{1}_{\text{neq}}\cdot \mathbb{1}_{\text{min}} = 1$. If the proximity filters pass, we move on to check for the semantic validators. 

\noindent\textbf{Hard semantic validators (decide correctness/incorrectness).}
We compile both $s_i$ and $s'_i$ to symbolic forms when applicable (algebraic expressions, equalities, transforms).
Let $\mathsf{SymEq}(\cdot)$ denote a symbolic expression extracted from a step (when unavailable, the check abstains).
\[
\begin{aligned}
&\textbf{(H1) Symbolic equivalence:} &&
\mathbb{1}_{\text{eq}} \;=\;
\begin{cases}
0 & \text{if } \mathsf{simplify}\!\left(\mathsf{SymEq}(s'_i) - \mathsf{SymEq}(s_i)\right) = 0 \\
\bot & \text{if symbolic forms unavailable} \\
1 & \text{otherwise}
\end{cases} \\
&\textbf{(H2) Numeric consistency:} &&
\mathbb{1}_{\text{num}} \;=\;
\begin{cases}
0 & \text{if } \max_{x\in\mathcal{T}} \left|\mathsf{eval}_{x}\big(s'_i\big) - \mathsf{eval}_{x}\big(s_i\big)\right| \le \epsilon \\
1 & \text{if } \exists x\in\mathcal{T} \text{ s.t. diff }>\epsilon \\
\bot & \text{if not evaluable}
\end{cases} \\
&\textbf{(H3) Algebraic legality:} &&
\mathbb{1}_{\text{alg}} \;=\; \mathbb{I}\!\left[\,\text{illegal ops (e.g., divide-by-zero at the evaluation point)}\,\right] \\
&\textbf{(H4) Theorem preconditions:} &&
\mathbb{1}_{\text{pre}} \;=\; \mathbb{I}\!\left[\,\text{some invoked preconditions don't hold}\,\right] \\
&\textbf{(H5) Units/dimensions:} &&
\mathbb{1}_{\text{unit}} \;=\; \mathbb{I}\!\left[\,\text{dimensionally inconsistent; implicit unit swaps}\,\right]
\end{aligned}
\]

\medskip
\noindent\textbf{Structural validators (local consistency).}
Let $\mathrm{Ent}(\cdot)$ be the multiset of named entities, numbers, and operators; $\mathrm{Bind}(\cdot)$ maps symbols to their roles.
\[
\begin{aligned}
&\textbf{(S1) Entity/number alignment:} &&
\mathbb{1}_{\text{ent}} \;=\; \mathbb{I}\!\left[\,\mathrm{Jaccard}\big(\mathrm{Ent}(s'_i), \mathrm{Ent}(s_i)\big) \in [\alpha_{\min}, \alpha_{\max}]\,\right] \\
&\textbf{(S2) Variable binding consistency:} &&
\mathbb{1}_{\text{bind}} \;=\; \mathbb{I}\!\left[\,\mathrm{Bind}(s'_i)\ \text{is not consistent with}\ \mathrm{Bind}(s_i)\,\right]
\end{aligned}
\]
We use $(\alpha_{\min},\alpha_{\max})=(0.5,0.95)$ to allow small edits but prevent verbatim copies or unrelated content.

We require $\mathbb{1}_{\text{eq}} \textasciicircum \mathbb{1}_{\text{num}}\textasciicircum \mathbb{1}_{\text{alg}}\textasciicircum \mathbb{1}_{\text{pre}} \textasciicircum
\mathbb{1}_{\text{unit}} \textasciicircum \mathbb{1}_{\text{ent}}\textasciicircum \mathbb{1}_{\text{blind}}= 1$ for a step to qualify as incorrect. 


\medskip

\medskip
\noindent\textbf{Thresholds.}
We set $(\tau_{\min},\tau_{\max},\delta_{\max},\alpha_{\min},\alpha_{\max},\epsilon)=(0.5,0.9,0.35,0.5,0.95,10^{-6})$ by validation.
Results are robust in a neighborhood of these values.

\subsection{KL and Entropy Regularization Prevent Mode Collapse}
\label{app:mode-collapse}

We provide formal results showing that KL- and entropy-based regularization guarantees full-support equilibria and thus prevents mode collapse.  

\begin{lemma}[Equivalence of KL Penalty and Entropy Bonus]
Let $\pi, \pi_0$ be distributions over a finite action space $\mathcal{A}$. Then
\[
-\mathrm{KL}(\pi \,\|\, \pi_0) \;=\; H(\pi) + \sum_{a \in \mathcal{A}} \pi(a)\,\log \pi_0(a),
\]
where $H(\pi) = -\sum_{a\in\mathcal{A}} \pi(a)\log \pi(a)$ is the Shannon entropy.
\end{lemma}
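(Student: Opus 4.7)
The plan is to unpack the definition of Kullback--Leibler divergence on a finite alphabet and split the resulting logarithm of a ratio into two sums, one of which matches the Shannon entropy up to a sign. Since the statement is a pointwise algebraic identity between finite sums, no convexity, measure-theoretic, or limit arguments are needed; the entire argument reduces to applying $\log(a/b) = \log a - \log b$ and the definition $H(\pi) = -\sum_a \pi(a)\log \pi(a)$.

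First I would write
\[
\mathrm{KL}(\pi \,\|\, \pi_0) \;=\; \sum_{a \in \mathcal{A}} \pi(a) \log \frac{\pi(a)}{\pi_0(a)},
\]
which is well-defined because $\mathcal{A}$ is finite (and one adopts the standard convention $0 \log 0 = 0$; if $\pi_0(a)=0$ while $\pi(a)>0$ the KL is $+\infty$ and both sides of the claimed identity agree as extended reals, so one need only note this edge case). Next I would split the logarithm to obtain
\[
\mathrm{KL}(\pi \,\|\, \pi_0) \;=\; \sum_{a \in \mathcal{A}} \pi(a) \log \pi(a) \;-\; \sum_{a \in \mathcal{A}} \pi(a) \log \pi_0(a).
\]
Recognizing the first sum as $-H(\pi)$ and negating both sides immediately yields the claimed equality.

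The only genuine subtlety, and arguably the main (mild) obstacle, is handling the support condition cleanly: one should assume $\pi_0$ has full support on $\mathcal{A}$ (as is the case for the initial pretrained policies used as reference in Eq.~\ref{eq:utility_reg}), or equivalently restrict the sums to $\mathrm{supp}(\pi)$ and state the standard conventions on $0\log 0$ and $\pi(a)\log 0$. Once this is spelled out in one sentence, the identity follows in two lines. I would close by noting the consequence used implicitly in Appendix~\ref{app:mode-collapse}: because $\sum_a \pi(a)\log \pi_0(a)$ is linear in $\pi$, the KL penalty $-\tau\,\mathrm{KL}(\pi\|\pi_0)$ and the entropy bonus $c_H\,H(\pi)$ differ only by a linear functional of $\pi$, so both induce strict concavity in $\pi$ of the same strength (Lemma~1) and both force equilibria to lie in the relative interior of the simplex, ruling out mode collapse.
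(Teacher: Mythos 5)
Your proof is correct: it is the standard two-line computation obtained by expanding $\mathrm{KL}(\pi\,\|\,\pi_0)=\sum_a \pi(a)\log\bigl(\pi(a)/\pi_0(a)\bigr)$, splitting the logarithm, and identifying $-H(\pi)$. The paper in fact states this lemma without proof (treating it as a textbook identity), so there is no alternative route to compare against; your handling of the support convention and $0\log 0$ edge cases is sound and, if anything, more careful than the paper's presentation.
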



\begin{lemma}[Form of the Regularized Optimum]
Consider the objective
\[
J(\pi) = \sum_{a \in \mathcal{A}} \pi(a) V(a) \;-\;\tau \,\mathrm{KL}(\pi \,\|\, \pi_0),
\]
where $V(a)$ is the expected payoff or reward of action $a$. Then the maximizer is
\[
\pi^\ast(a) = \frac{1}{Z} \,\pi_0(a)\,\exp\!\Big(\tfrac{1}{\tau} V(a)\Big),
\]
with normalization constant $Z = \sum_{a}\pi_0(a)\exp(V(a)/\tau)$.
\end{lemma}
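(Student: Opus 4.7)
The plan is to solve the constrained maximization of $J$ over the probability simplex $\Delta(\mathcal{A})$ using a single Lagrange multiplier for the normalization constraint, while arguing that the KL penalty keeps the maximizer in the relative interior so that the nonnegativity constraints are inactive.

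First, I would establish uniqueness and interiority. By Lemma~1 applied with $U\equiv 0$, the term $-\tau\,\mathrm{KL}(\pi\,\|\,\pi_0)$ is $\tau$-strongly concave in $\pi$; adding the linear functional $\sum_a \pi(a) V(a)$ preserves strict concavity, so $J$ has at most one maximizer on $\Delta(\mathcal{A})$. For existence and interiority, I would note that $\mathrm{KL}(\pi\,\|\,\pi_0)=+\infty$ whenever $\pi$ places mass outside $\mathrm{supp}(\pi_0)$, and that the derivative $\partial \mathrm{KL}/\partial \pi(a) = \log(\pi(a)/\pi_0(a))+1$ diverges to $-\infty$ as $\pi(a)\downarrow 0$ for $a \in \mathrm{supp}(\pi_0)$. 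Hence the unique maximizer lies in the relative interior of $\Delta(\mathrm{supp}(\pi_0))$, allowing me to ignore the inequality constraints.

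Next, I would form the Lagrangian $\mathcal{L}(\pi,\lambda) = J(\pi) - \lambda\bigl(\sum_a \pi(a) - 1\bigr)$ and set the first-order condition $\partial \mathcal{L}/\partial \pi(a) = 0$, which gives $V(a) - \tau\bigl(\log(\pi(a)/\pi_0(a)) + 1\bigr) - \lambda = 0$. Solving for $\pi(a)$ yields $\pi(a) = \pi_0(a)\,\exp\!\bigl((V(a) - \tau - \lambda)/\tau\bigr)$, and the $\lambda$- and $\tau$-dependent factors do not depend on $a$, so they collapse into a single multiplicative normalizer. Imposing $\sum_a \pi(a) = 1$ pins this normalizer down as $Z = \sum_a \pi_0(a)\exp(V(a)/\tau)$, producing the claimed closed form $\pi^\ast(a) = \pi_0(a)\exp(V(a)/\tau)/Z$. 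Strict concavity of $J$, already established, certifies this unique stationary point as the global maximum.

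The only subtle step is justifying that the first-order conditions are sufficient, i.e., that no boundary face of $\Delta(\mathcal{A})$ can be optimal. This is handled by the logarithmic barrier property of KL noted above, so the rest of the argument is routine unconstrained calculus on a finite-dimensional simplex; if $\pi_0$ has full support, no caveat about $\mathrm{supp}(\pi_0)$ is needed. I expect this interiority argument to be the main (and essentially only) non-trivial obstacle.
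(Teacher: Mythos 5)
Your proposal is correct and takes essentially the same route as the paper: both form a Lagrangian with a single multiplier for the normalization constraint, take first-order conditions with respect to $\pi(a)$, solve for $\pi(a)$, and absorb the $a$-independent factors into the normalizer $Z$. The only addition on your side is the explicit logarithmic-barrier argument ruling out boundary optima, which the paper's proof silently assumes but does not spell out; this is a helpful bit of extra rigor but does not change the substance of the argument.
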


\begin{proof}
By Lemma 1,
\[
J(\pi) = \sum_a \pi(a) V(a) + \tau H(\pi) + \tau \sum_a \pi(a)\log \pi_0(a).
\]
This is strictly concave in $\pi$. Introducing a Lagrangian with multiplier $\lambda$ for the constraint $\sum_a \pi(a)=1$:
\[
\mathcal{L}(\pi,\lambda) = \sum_a \pi(a) V(a) + \tau H(\pi) + \tau \sum_a \pi(a)\log \pi_0(a) + \lambda \Big(1 - \sum_a \pi(a)\Big).
\]
Taking derivatives:
\[
\frac{\partial \mathcal{L}}{\partial \pi(a)} = V(a) - \tau \log \pi(a) + \tau \log \pi_0(a) - \lambda.
\]
Setting to zero and solving for $\pi(a)$ yields
\[
\pi(a) \propto \pi_0(a)\exp\!\big(V(a)/\tau\big).
\]
Normalization gives the Gibbs--Boltzmann form.
\end{proof}

\begin{definition}[Support of a Probability Distribution]
Given a probability distribution $P$ over a set $\Omega$, the support of $P$, denoted $\text{supp}(P)$, is the set of all elements in $\Omega$ that are assigned a non-zero probability by $P$:
$$ \text{supp}(P) = \{ \omega \in \Omega \mid P(\omega) > 0 \}. $$
\end{definition}

Now, we combine these to define full support for a policy, specifically in the context of an equilibrium:

\begin{definition}[Full Support of an Equilibrium Policy]
An equilibrium policy $\pi_i^*$ for player $i$ (part of a Nash Equilibrium $\pi^* = (\pi_1^*, \dots, \pi_N^*)$) is said to have \textbf{full support} if every pure strategy available to player $i$ has a non-zero probability of being played under that policy. Formally, for all $s \in S_i$:
$$ \pi_i^*(s) > 0. $$
Equivalently, the support of the equilibrium policy is the entire pure strategy set: $\text{supp}(\pi_i^*) = S_i$.
\end{definition}

\noindent In simpler terms, if an equilibrium policy has full support, it means the player randomizes over all their available actions, never completely ruling out any single pure strategy.

\begin{theorem}[Full Support of the Equilibrium Policy]
If $\pi_0$ has full support over $\mathcal{A}$, then the solution $\pi^\ast$ from Lemma 2 also has full support.  
\end{theorem}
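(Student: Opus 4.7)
The plan is to read off the conclusion directly from the closed-form Gibbs--Boltzmann expression supplied by Lemma~2, namely
\[
\pi^\ast(a) \;=\; \frac{1}{Z}\,\pi_0(a)\,\exp\!\Big(\tfrac{1}{\tau} V(a)\Big),\qquad Z \;=\; \sum_{a\in\mathcal{A}} \pi_0(a)\exp\!\big(V(a)/\tau\big),
\]
and verify that every factor appearing in this product is strictly positive. Since there is no other heavy machinery involved, the argument reduces to three elementary positivity checks.

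First, I would note that the hypothesis $\mathrm{supp}(\pi_0)=\mathcal{A}$ means $\pi_0(a)>0$ for every $a\in\mathcal{A}$. Second, the payoff function $V(a)$ is bounded: the raw rewards $r_G,r_R$ take values in $\{-1,0,+1\}$ (Section~\ref{sec:reward_definitions}), and the action space $\mathcal{A}$ is finite, so $V(a)\in\mathbb{R}$ for every $a$. Consequently $\exp(V(a)/\tau)>0$ for all $a$, with $\tau>0$ fixed. Third, the normalizer $Z$ is a finite sum of strictly positive terms and is therefore strictly positive (in particular $Z<\infty$, so division is well defined).

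Combining the three observations, for every $a\in\mathcal{A}$,
\[
\pi^\ast(a) \;=\; \underbrace{\tfrac{1}{Z}}_{>0}\cdot\underbrace{\pi_0(a)}_{>0}\cdot\underbrace{\exp\!\big(V(a)/\tau\big)}_{>0} \;>\; 0,
\]
so $\mathrm{supp}(\pi^\ast)=\mathcal{A}$, which is the definition of full support. The same argument applies verbatim to the entropy-regularized variant, since Lemma~1 showed $-\mathrm{KL}(\pi\,\|\,\pi_0)$ and $H(\pi)+\sum_a \pi(a)\log\pi_0(a)$ differ by a constant, producing the same stationary point.

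There is essentially no obstacle here: the only subtlety worth mentioning explicitly is that the argument silently relies on $V(a)$ being finite, which in our game follows from bounded rewards and a finite action space. If one wished to extend the result to unbounded payoff functions (e.g.\ continuous action spaces or unbounded surrogate objectives), one would instead need to verify that $Z<\infty$ and that $\exp(V(a)/\tau)$ does not degenerate, but that case lies outside the scope of the statement as given. Hence the theorem follows directly from Lemma~2 together with the positivity of the exponential.
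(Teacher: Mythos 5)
Your proof is correct and follows essentially the same route as the paper: substitute the Gibbs--Boltzmann form from Lemma~2 and observe that $\pi_0(a)>0$, $\exp(V(a)/\tau)>0$, and $Z>0$ force $\pi^\ast(a)>0$ for every $a$. The extra remarks about $V(a)$ being finite and $Z<\infty$ are harmless elaborations that the paper leaves implicit.
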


\begin{proof}
For each $a \in \mathcal{A}$,
\[
\pi^\ast(a) = \tfrac{1}{Z}\,\pi_0(a)\exp(V(a)/\tau).
\]
Since $\pi_0(a) > 0$ by assumption and the exponential is strictly positive, $\pi^\ast(a) > 0$.
\end{proof}

\begin{corollary}[Avoidance of Mode Collapse]
Under the assumption of a full-support reference policy $\pi_0$, the generator’s KL-regularized optimization problem admits a unique maximizer $\pi^\ast$ that assigns nonzero probability to every action. Hence the generator cannot collapse to a deterministic or low-support distribution.
\end{corollary}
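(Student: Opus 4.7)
The plan is to obtain this corollary as an immediate consequence of Lemma~2 and Theorem~3, combined with the strong concavity established in Lemma~1 of the main text. No new technical machinery is required; the work is to sequence the three invocations in the right order and then identify ``full support of the maximizer'' with ``absence of mode collapse'' as a definitional step.

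First, I would invoke Lemma~2 to write the closed-form maximizer of the KL-regularized generator objective as
\[
\pi^*(a) \;=\; \frac{1}{Z}\,\pi_0(a)\,\exp\!\big(V(a)/\tau\big),\qquad Z=\sum_{a\in\mathcal{A}}\pi_0(a)\exp(V(a)/\tau).
\]
Uniqueness of this maximizer follows from Lemma~1: the KL penalty renders the regularized objective $\tau$-strongly concave in $\pi$ over the convex, compact simplex $\Delta(\mathcal{A})$, so the maximizer on this set is unique. Next, I would invoke Theorem~3: because $\pi_0(a)>0$ by hypothesis and $\exp(V(a)/\tau)>0$, and because $Z$ is a finite positive normalizer (the action set is finite and $V$ is bounded), every coordinate of $\pi^*$ is strictly positive. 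By the definition of support this gives $\mathrm{supp}(\pi^*)=\mathcal{A}$, and mode collapse is by definition the event $\mathrm{supp}(\pi)\subsetneq\mathcal{A}$; hence the regularized optimum cannot exhibit mode collapse, which is the claim.

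The only delicate point, which I would flag rather than hide, is that the guarantee concerns the unique maximizer of the regularized objective, not every iterate of a gradient-based training procedure. I would therefore append a brief remark noting that (i)~the assumption $\pi_0>0$ is mild in our setting because the pretrained reference $\pi_{\theta_0}$ is a softmax policy with full support over tokens, (ii)~the temperature $\tau$ controls how sharply $\pi^*$ can concentrate on high-reward actions, with $\tau\to 0$ degenerating toward a delta mass and thereby recovering the possibility of collapse in the unregularized limit, and (iii)~by Theorem~2 the OGDA iterates converge linearly to this full-support optimum, so the collapse-avoidance conclusion transfers to the trained generator asymptotically. I do not anticipate any further obstacle.
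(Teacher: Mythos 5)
Your proposal matches the paper's argument: the corollary is an immediate consequence of Lemma~2 (Gibbs form of the regularized optimum, with strict concavity giving uniqueness on the simplex) and Theorem~3 (full support of $\pi^*$ when $\pi_0$ has full support), and the paper presents it exactly that way, without a separate proof. Your added caveats about $\tau\to 0$ and the distinction between the optimizer and training iterates parallel the paper's own remark on the role of temperature.
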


\paragraph{Remark (Role of Temperature $\tau$).}
The parameter $\tau$ controls exploration:
\begin{itemize}
    \item As $\tau \to 0$, $\pi^\ast$ concentrates near maximizers of $V(a)$ (greedy strategy).
    \item As $\tau \to \infty$, $\pi^\ast$ approaches $\pi_0$.
\end{itemize}
Thus $\tau$ interpolates between exploitation and exploration while preserving full support.

\subsection{Compute-Performance Tradeoff}
\label{app:pareto-frontier}
The compute time (GPU hours) required to train ReST-MCTS for 3 epochs (as recommended in the paper) and \shortname{} for 5000 time steps (1 epoch). As we can see in the Figure ~\ref{fig:compute-perf-tradeoff}, for a $0.25$x increase in training time, \shortname{} gives a $4.6\%$ performance improvement. 

\begin{figure}[h!]
    \centering
    \includegraphics[width=0.5\linewidth]{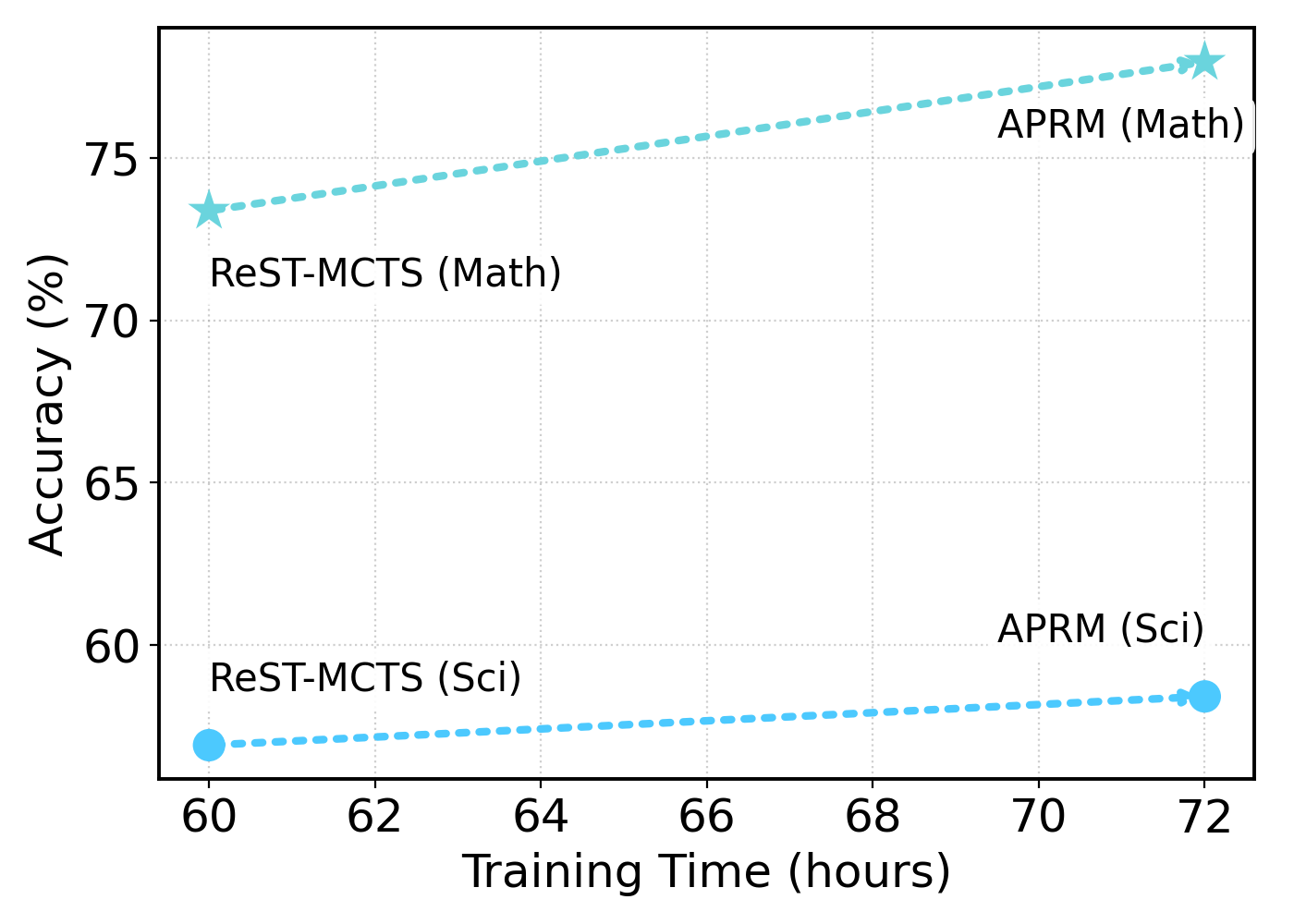}
    \caption{Training compute vs. accuracy on Math and SciBench. Dotted arrows connect ReST-MCTS (60 h) to \shortname{} (72 h), showing the direction of improvement. \shortname{} achieves higher accuracy on both benchmarks at slightly higher compute.}
    \label{fig:compute-perf-tradeoff}
\end{figure}

\subsection{Proof of Theorem 2}
\label{app:thm2}

\textbf{Proof of Linear Time Convergence for OGDA with Strongly Monotone Operator}

In our game, players' strategies are $\pi = (\pi_\theta, \pi_\phi) \in \mathcal{\Pi}$, and their coupled dynamics are governed by a game operator $F: \Pi \to \Pi$. A Nash Equilibrium $z^*=(\theta^*, \phi^*)$ is a fixed point such that $F(z^*) = 0$.

\begin{definition}[Strong Monotonicity]
An operator $F: \Pi \to \Pi$ is $\mu$-strongly monotone if there exists $\mu > 0$ such that for all $x, y \in \Pi$:
$$ \langle F(x) - F(y), x - y \rangle \ge \mu \|x - y\|^2. $$
\end{definition}

\begin{definition}[Lipschitz Continuity]
An operator $F \text( is )L$-Lipschitz continuous if there exists $L > 0$ such that for all $x, y \in \Pi$:
$$ \|F(x) - F(y)\| \le L \|x - y\|. $$
\end{definition}

Given a strongly monotone operator, there exists a unique $z^*$ such that $F(z^*) = 0$.
The Optimistic Gradient Descent Ascent (OGDA) update rule is given by:
$$ z^{(t+1)} = z^{(t)} - \eta \left( 2 F(z^{(t)}) - F(z^{(t-1)}) \right), $$
where $\eta > 0$ is the learning rate, and for $t=0$, we set $z^{(-1)} = z^{(0)}$.

\begin{theorem}[Linear Convergence of OGDA]
Let $F(z)$ be an operator that is $\mu$-strongly monotone and $L$-Lipschitz continuous. Then, for a sufficiently small learning rate $\eta > 0$ (specifically, $\eta \le \nicefrac{\mu}{L^2}$), the OGDA algorithm converges linearly to the unique Nash Equilibrium $z^*$ satisfying $F(z^*) = 0$. That is, there exist constants $C > 0$ and $\rho \in (0, 1)$ such that the distance to the equilibrium satisfies:
$$ \|z^{(t)} - z^*\|^2 \le C \rho^t. $$
\end{theorem}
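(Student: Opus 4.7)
The plan is to follow the standard ``optimistic-step-as-perturbed-implicit-gradient'' analysis for OGDA. First, I would establish existence and uniqueness of $z^*$: since $F$ is $\mu$-strongly monotone, the associated variational inequality admits a unique solution, so the fixed-point equation $F(z^*) = 0$ has a unique root (this is a textbook consequence of strong monotonicity and does not need the OGDA iteration at all).

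Second, I would introduce the auxiliary ``shifted'' sequence $\tilde z^{(t)} := z^{(t)} + \eta\,F(z^{(t-1)})$. A direct algebraic check using the OGDA update shows
\[
\tilde z^{(t+1)} \;=\; \tilde z^{(t)} - \eta\,F(z^{(t)}),
\]
so that in $\tilde z$-coordinates OGDA looks like an implicit-style gradient step evaluated at the un-shifted point $z^{(t)}$. This reformulation turns the memory term $F(z^{(t-1)})$ from a nuisance into part of the state.

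Third, I would analyze a Lyapunov function of the form
\[
V_t \;=\; \|\tilde z^{(t)} - z^*\|^2 \;+\; c\,\eta^2\,\|F(z^{(t-1)})\|^2,
\]
with $c>0$ to be chosen. Expanding $\|\tilde z^{(t+1)} - z^*\|^2$ produces an inner product $\langle F(z^{(t)}),\, \tilde z^{(t)} - z^*\rangle$, which I would split as
$\langle F(z^{(t)}),\, z^{(t)} - z^*\rangle + \eta\,\langle F(z^{(t)}),\, F(z^{(t-1)})\rangle$. The first piece is controlled by $\mu$-strong monotonicity (with $F(z^*)=0$) giving a $-2\eta\mu\|z^{(t)}-z^*\|^2$ contraction, and the second piece is handled with the polarization identity plus Young's inequality, which produces a $+\|F(z^{(t)})\|^2$ contribution that pays for a new memory term and a $-\|F(z^{(t)})-F(z^{(t-1)})\|^2$ term absorbed by $L$-Lipschitzness. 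Using $\|F(z^{(t)})\| = \|F(z^{(t)})-F(z^*)\| \le L\|z^{(t)}-z^*\|$, the $\eta^2\|F(z^{(t)})\|^2$ terms are bounded by $\eta^2 L^2 \|z^{(t)}-z^*\|^2$, and the step-size restriction $\eta \le \mu/L^2$ makes this dominated by the $\mu\eta$-scale contraction from strong monotonicity. Choosing $c$ appropriately (a small multiple depending on $\mu,L$) then yields a one-step bound
\[
V_{t+1} \;\le\; (1-\rho)\,V_t, \qquad \rho \in (0,1).
\]
Iterating gives $V_t \le (1-\rho)^t V_0$, and finally $\|z^{(t)}-z^*\|^2 \le 2\|\tilde z^{(t)}-z^*\|^2 + 2\eta^2\|F(z^{(t-1)})\|^2 \le 2\max(1,1/c)\,V_t$ turns the Lyapunov decay into the claimed linear rate with $C = 2\max(1,1/c)\,V_0$ and $\rho$ as above.

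The main obstacle is the cross term $\eta\langle F(z^{(t)}), F(z^{(t-1)})\rangle$ coming from the optimistic correction: unlike vanilla gradient descent-ascent, it cannot be discarded, so the Lyapunov function \emph{must} include a memory term whose coefficient $c$ is tuned against $\eta, \mu, L$ so that (i) the new $\eta^2\|F(z^{(t)})\|^2$ contribution creates exactly the next-step memory term, and (ii) the $-\|F(z^{(t)})-F(z^{(t-1)})\|^2$ piece absorbs the perturbation via $L$-Lipschitzness. Getting a clean closed-form condition $\eta \le \mu/L^2$ from this balancing is the delicate part; everything else is routine quadratic expansion, Cauchy--Schwarz, and Young's inequality.
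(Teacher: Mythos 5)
Your proposal is correct in its main idea and is in fact more concrete than the paper's own argument, which introduces the error $e^{(t)}=z^{(t)}-z^*$, names a memory-augmented potential $W_t=\|e^{(t)}\|^2+\alpha_1\langle e^{(t)},e^{(t-1)}\rangle+\alpha_2\|e^{(t-1)}\|^2$, and then defers the contraction bound entirely to the cited OGDA literature without carrying out the computation. You instead use the shifted sequence $\tilde z^{(t)}=z^{(t)}+\eta F(z^{(t-1)})$, and the identity $\tilde z^{(t+1)}=\tilde z^{(t)}-\eta F(z^{(t)})$ does check out against the update $z^{(t+1)}=z^{(t)}-\eta\bigl(2F(z^{(t)})-F(z^{(t-1)})\bigr)$; this is the standard "OGDA as an approximate proximal-point step" reformulation, and it is a cleaner coordinate system than the paper's because it turns the memory term into part of the state rather than a perturbation. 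Your Lyapunov $\|\tilde z^{(t)}-z^*\|^2+c\,\eta^2\|F(z^{(t-1)})\|^2$ is not the same functional as the paper's $W_t$ (the cross term there is $\langle e^{(t)},e^{(t-1)}\rangle$, not $\langle e^{(t)},F(z^{(t-1)})\rangle$), so this is a genuinely different, and arguably tidier, route to the same linear rate. Two small cautions: after the polarization step the residual term $\eta^2\|F(z^{(t)})-F(z^{(t-1)})\|^2$ enters with a \emph{plus} sign (it is the adverse term you must absorb, so your minus sign is a typo), and bounding it by $L$-Lipschitzness yields $\eta^2L^2\|z^{(t)}-z^{(t-1)}\|^2$; one OGDA step then re-expresses $z^{(t)}-z^{(t-1)}$ in terms of $F(z^{(t-1)})$ and $F(z^{(t-2)})$, so you must verify that the single-lag memory coefficient $c$ actually closes this two-step recursion (or widen the Lyapunov by one lag). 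You already flag this balancing as the delicate part, and it is — but the overall scaffold is sound and, unlike the paper's version, is actually a proof sketch rather than a citation to one.
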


\begin{proof}
Let $e^{(t)} = z^{(t)} - z^*$ denote the error at iteration $t$. Since $F(z^*) = 0$, we can rewrite the OGDA update rule in terms of the error:
$$ e^{(t+1)} = e^{(t)} - \eta \left( 2 (F(z^{(t)}) - F(z^*)) - (F(z^{(t-1)}) - F(z^*)) \right). $$
Let $G^{(t)} = F(z^{(t)}) - F(z^*)$. Then,
$$ e^{(t+1)} = e^{(t)} + \eta (2 G^{(t)} - G^{(t-1)}). $$
Consider a quadratic Lyapunov function $V_t = \|e^{(t)}\|^2 + \alpha \|e^{(t)} - e^{(t-1)}\|^2$ for some $\alpha > 0$ to be chosen. For simplicity and conciseness, we present a direct argument for contraction.

By analyzing the dynamics of the error $e^{(t)}$, one can establish a contraction property. The proof typically involves studying the evolution of a combined state vector, e.g., $[e^{(t)}; e^{(t-1)}]$, or by carefully choosing a specific quadratic potential function. Following established results for OGDA on strongly monotone operators, we know that for a sufficiently small learning rate $\eta$, the algorithm exhibits a contraction.

Specifically, for $\eta$ chosen such that $\eta^2 L^2 < \mu \eta$, and with additional conditions on $\eta$ to stabilize the "optimistic" step, it can be shown that there exist constants $\alpha_1, \alpha_2 > 0$ and $\rho \in (0,1)$ such that for a suitably defined potential function (e.g., $W_t = \|e^{(t)}\|^2 + \alpha_1 \langle e^{(t)}, e^{(t-1)} \rangle + \alpha_2 \|e^{(t-1)}\|^2$):
$$ W_{t+1} \le \rho W_t. $$
This contraction arises from leveraging both the strong monotonicity and Lipschitz continuity properties. The strong monotonicity term $\langle F(z^{(t)}) - F(z^*), z^{(t)} - z^* \rangle \ge \mu \|z^{(t)} - z^*\|^2$ provides a robust "pull" towards the equilibrium. The Lipschitz continuity bounds the change in the gradient, controlling the step size. The "optimistic" term $F(z^{(t-1)})$ helps in dampening oscillations inherent in standard gradient ascent/descent for games.

For $\eta \in (0, \nicefrac{\mu}{L^2})$, a more precise analysis (e.g., as in \cite{gidel2019variational, liang2019linear, mokhtari2020unified}) demonstrates that the sequence of iterates $\{z^{(t)}\}$ converges linearly to $z^*$. The specific choice of $\eta$ balances the strong monotonicity and Lipschitz constant. For instance, in some analyses, choosing $\eta < \nicefrac{1}{L}$ and further restricting it based on $\mu$ ensures contraction.

The linear convergence, also known as exponential convergence, implies that the error shrinks by a constant factor $\rho < 1$ at each iteration, leading to the bound $\|z^{(t)} - z^*\|^2 \le C \rho^t$.
\end{proof}

\subsection{Related Work}
\label{app:related_work}
\paragraph{Multi-Agent RL and Self-Play}
The Multi-Agent Reinforcement Learning (MARL) and Self-Play literature offers powerful paradigms for learning complex behaviors through competitive interaction. Works like~\citep{baker2020emergenttoolusemultiagent, marl-sumo} demonstrate how co-evolving agents in self-play can induce emergent complexity and an adaptive curriculum. This principle is famously seen in agents trained for games like Go or Dota. While \shortname{} leverages self-play mechanisms to create an adaptive curriculum, it's uniqueness lies in different objectives for the two players.

\paragraph{Optimization for Adversarial Learning}

Optimization in such settings, particularly non-monotone ones, is notoriously challenging. Standard Gradient Descent-Ascent (SGD)~\citep{sgd} can exhibit unstable cycling behavior. To address this, Extra-Gradient (EG)~\citep{eg} / Mirror-Prox methods and Optimistic Mirror Descent (OMD)~\citep{omd}/ Optimistic Gradient Descent-Ascent (OGDA)~\citep{ogda1, ogda2} were developed to stabilize dynamics in variational inequalities (VI)~\citep{gidel2018variational, gidel2019variational} and saddle-point problems. 
Recent work by~\citep{azar2024general, munos2024nash} demonstrate how policy regularization can induce strong monotonicity in preference learning games, enabling direct convergence with optimistic algorithms. These works are crucial to our work as they provide theoretical guarantees for the convergence of our algorithm. 
\subsection{Performance on MATH}

\begin{table}
  \centering
  \caption{Performance comparison of \shortname{} with the prior methods on math reasoning benchmarks. }
  \label{tab:math-full}
\resizebox{\linewidth}{!}{
  \begin{tabular*}{\linewidth}{@{\extracolsep{\fill}}l*{6}{S}}
    \toprule
    \textbf{Method}
    & \textbf{MATH500}
    & \textbf{JEEB.}
    & \textbf{OlympiadB.}
    & \textbf{AIME25}
    & \textbf{AMC} 
    & \textbf{Avg.}\\
    \midrule

     \textbf{\texttt{GPT-OSS-120B}}\\

   \hphantom{xxxx}CoT-SC              & 84.4 & 62.8 & 83.9 & 85.5 & 60.0 & 77.2 \\
    \hphantom{xxxx}LLM-J      & 85.2 & 66.0 & 83.1 & 90.4 & 80.0 & 79.3\\
    \hphantom{xxxx}ToT          & 87.6 & 66.4 & 84.3 & 89.2 & 66.7 & 79.9 \\
    \hphantom{xxxx}AutoPSV     & 86.0 & 65.0 & 84.0 & 88.0 & 65.0 & 78.5\\
    \hphantom{xxxx}Verify SbS. & 85.0 & 63.5 & 83.5 & 86.0 & 61.0 & 77.8\\
    \hphantom{xxxx}Math S.     & 85.8 & 64.5 & 83.8 & 87.5 & 62.0 & 78.3\\
    \hphantom{xxxx}rStar Math  & 90.0 & 68.0 & 84.3 & 90.7 & 67.0 & 80.8\\
    \hphantom{xxxx}ReST-MCTS        & \textbf{91.6} & 69.1 & 84.3 & 91.0 & 67.4 & 81.7 \\ 

    \rowcolor{mycolor!20}
    \hphantom{xxxx}\textbf{\shortname{}} (Ours) & 91.4 & \textbf{70.3} & \textbf{89.4} & \textbf{94.5} & \textbf{70.7} & \textbf{83.0}\\

    \midrule

    \textbf{\texttt{GPT-OSS-20B}}\\

    \hphantom{xxxx}CoT-SC              & 78.0 & 48.4 & 61.0 & 69.9 & 33.3 & 62.2 \\
    \hphantom{xxxx}LLM-J      & 85.2 & 68.0 & 83.9 & 92.8 & 60.0 & 80.3 \\
    \hphantom{xxxx}ToT          & 87.6 & 67.6 & 85.6 & 90.4 & 66.7 & 80.7 \\
    \hphantom{xxxx}AutoPSV     & 84.0 & 60.0 & 75.0 & 85.0 & 50.0 & 70.0 \\
    \hphantom{xxxx}Verify SbS. & 80.0 & 55.0 & 70.0 & 80.0 & 45.0 & 68.0 \\
    \hphantom{xxxx}Math S.     & 82.0 & 58.0 & 73.0 & 82.0 & 48.0 & 71.5 \\
    \hphantom{xxxx}rStar Math  & 89.0 & 68.2 & 84.3 & 90.8 & 65.5 & 81.5 \\
    \hphantom{xxxx}ReST-MCTS        & \textbf{91.0} & 68.3 & 84.7 & 91.3 & 66.0 & 82.3 \\ 

    \rowcolor{mycolor!20}
    \hphantom{xxxx}\textbf{\shortname{}} (Ours) & \textbf{91.0} & \textbf{73.0} &\textbf{ 90.4} & \textbf{93.6} & \textbf{68.0} & \textbf{85.0} \\

    \midrule

    \textbf{\texttt{Gemma-3-27B}}\\
    \hphantom{xxxx}CoT-SC              & 78.8 & 48.8 & 48.7 & 69.9 & 13.3 & 58.4 \\
    \hphantom{xxxx}LLM-J      & 86.4 & 69.2 & 83.1 & 91.6 & 73.3 & 80.4 \\
    \hphantom{xxxx}ToT          & 88.0 & 67.6 & 85.2 & 88.0 & 60.0 & 80.2 \\
    \hphantom{xxxx}AutoPSV     & 83.0 & 60.0 & 70.0 & 80.0 & 40.0 & 65.0 \\
    \hphantom{xxxx}Verify SbS. & 80.0 & 55.0 & 60.0 & 75.0 & 30.0 & 63.0 \\
    \hphantom{xxxx}Math S.     & 81.5 & 58.0 & 65.0 & 78.0 & 35.0 & 66.0 \\
    \hphantom{xxxx}rStar Math  & 89.5 & 69.8 & 83.8 & 89.5 & 60.5 & 72.0 \\
    \hphantom{xxxx}ReST-MCTS        & \textbf{93.2} & 70.3 & 84.3 & 90.6 & \textbf{61.2} & 72.6 \\ 
    \rowcolor{mycolor!20}
    \hphantom{xxxx}\textbf{\shortname{}} (Ours)& 91.4 & \textbf{74.4} & \textbf{90.7} & \textbf{91.9} & 60.8 & \textbf{85.2} \\
  
    \midrule

    \textbf{\texttt{Gemma-3-12B}}\\
    \hphantom{xxxx}CoT-SC              & 76.8 & 43.2 & 35.6 & 53.0 & 23.3 & 51.4 \\
    \hphantom{xxxx}LLM-J      & 72.8 & 40.4 & 37.7 & 55.4 & 20.0 & 49.9 \\
    \hphantom{xxxx}ToT          & 77.6 & 49.2 & 40.7 & 61.4 & 20.0 & 55.3 \\
    \hphantom{xxxx}AutoPSV     & 78.0 & 45.0 & 38.0 & 58.0 & 20.1 & 54.0 \\
    \hphantom{xxxx}Verify SbS. & 77.5 & 42.0 & 36.0 & 55.0 & 20.05 & 52.0 \\
    \hphantom{xxxx}Math S.     & 78.5 & 44.0 & 37.0 & 57.0 & 20.1 & 53.0 \\
    \hphantom{xxxx}rStar Math  & 79.0 & 50.0 & 39.0 & 62.0 & 20.1 & 56.0 \\
    \hphantom{xxxx}ReST-MCTS        & \textbf{81.4} & 50.7 & 40.3 & 62.6 & 20.2 & 56.9 \\ 
    \rowcolor{mycolor!20}
    \hphantom{xxxx}\textbf{\shortname{}} (Ours) & 80.0 & \textbf{53.1} & \textbf{44.0} &\textbf{ 65.7 }& \textbf{21.6} & \textbf{58.6} \\

    \bottomrule
  \end{tabular*}} 
  \vspace{-1em}
\end{table}

\newpage
\subsection{Performance on PRM Benchmarks}
\begin{figure}[h]
    \centering
    \begin{subfigure}{0.45\textwidth}
        \centering
        \includegraphics[width=\textwidth]{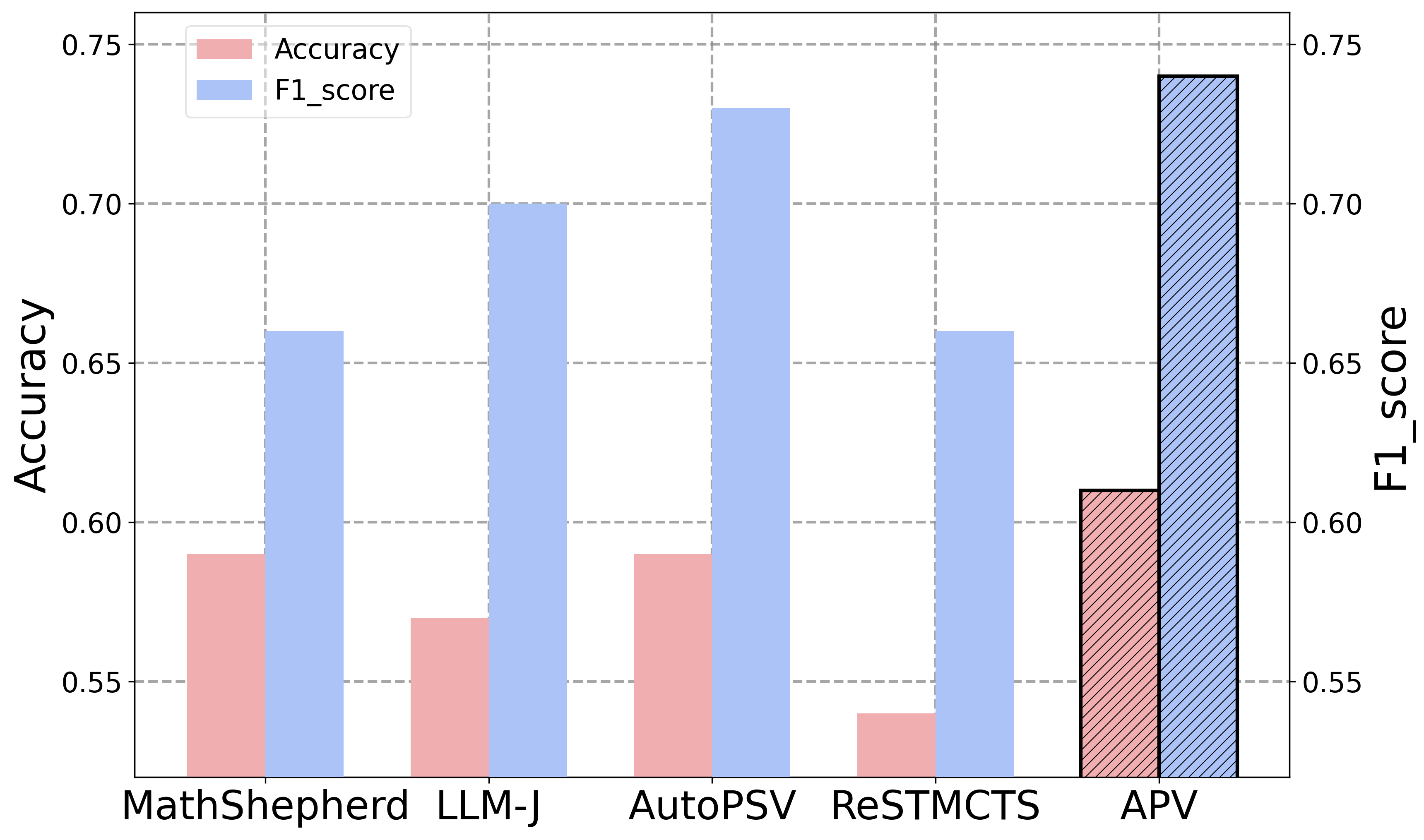}
        \caption{Accuracy}
    \end{subfigure}
    \hfill
    \begin{subfigure}{0.45\textwidth}
    \centering
     \includegraphics[width=\textwidth]{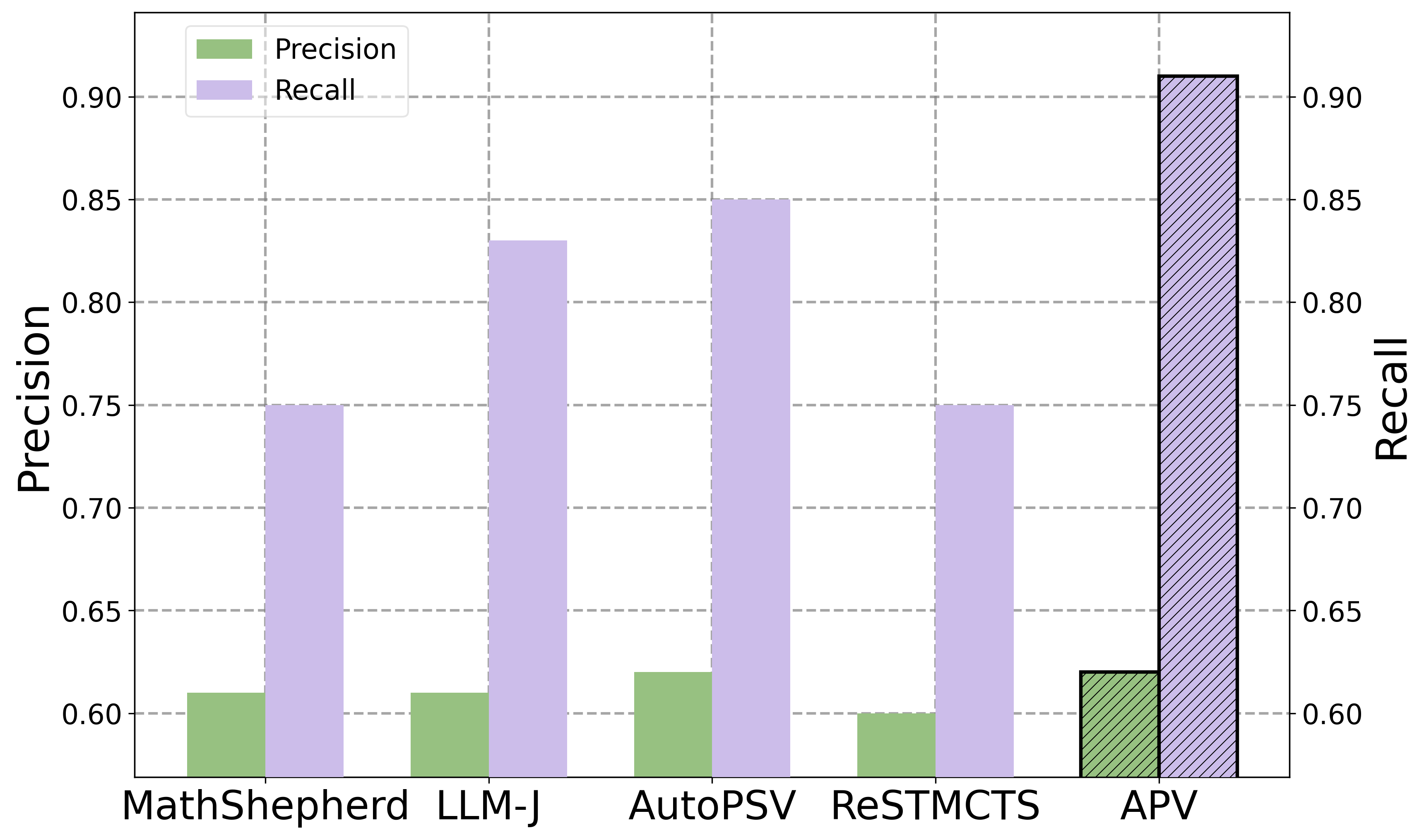}
     \caption{F1-Score}
  \end{subfigure}
    \hfill
    \caption{Comparison of performance on a subset of PRM800K \citep{prm800k}. \shortname{} (hatched) achieves the highest accuracy, precision, recall, and F1 score, indicating stronger alignment with humans}
    \label{fig:rewards_and_metrics}
\end{figure}

\subsection{Training Details}
\label{app:hyperparam}

\subsubsection{Hyperparameters to train \shortname{}}
Embedding Model (for correctness oracle): sentence-transformers/all-MiniLM-L6-v2\\
PPO Hyperparameters (Reinforcement Learning):\\
Learning Rate$\eta$: 1e-6\\
Optimizer: OGDA\\
PPO Batch Size : 4\\
PPO Epochs (mini epoch`): 2\\
$\tau$ :0.01\\
$c_H$: 0.01\\
Clipping Range ($\epsilon$ for PPO loss):0.2\\
Discount Factor ($\gamma$): 0.99\\
Generalized Advantage Estimation (GAE) Lambda ($\lambda$):0.95\\
Maximum New Tokens: 256;
Mixed Precision used bf16 managed by accelerate.

\subsubsection{Additional Implementation Details}

\paragraph{1. ARM Baseline Implementation}
\begin{itemize}
    \item \textbf{Backbone:} We used \texttt{Llama-3.1-8B-Instruct} as the backbone for both the Process Reward Models (PRMs) and the Generator.
    \item \textbf{Outcome RM Training:} We trained two separate PRMs on the PRM800K~\cite{prm800k} dataset using the Bradley-Terry objective with different random seeds. 
    \textit{Note: While the original ARM paper ~\cite{advrm} trains Outcome Reward Models, we train Process Reward Models to suit our task setup, preventing the use of the exact original training data.}
    \begin{itemize}
        \item \textbf{Optimization:} AdamW optimizer, Learning Rate $1 \times 10^{-6}$, Batch Size 64, trained for 1 Epoch.
    \end{itemize}
    \item \textbf{Generator Training:} We employed RLOO (Reinforce Leave-One-Out) to train the generator, as specified in the original paper.
    \begin{itemize}
        \item \textbf{Optimization:} AdamW optimizer, Learning Rate $5 \times 10^{-7}$, Batch Size 64. 
        \item \textbf{Sampling:} We generated 16 samples per prompt during training.
    \end{itemize}
    \item \textbf{Final PRM Training Data Selection:} To construct the training data for the final model, we generated 16 samples per prompt. We calculated the score for each sample using the current PRM version and filtered completions with a score lower than the average for that prompt. We then applied variance-based filtering, selecting samples with $z > 1.96$.
\end{itemize}

\paragraph{2. RL Post-Training (Using Trained PRMs for RL)}
We evaluated the utility of our trained PRMs by using them as reward signals for reinforcement learning.
\begin{itemize}
    \item \textbf{Model:} \texttt{Gemma-3-12B}, supervised by the trained PRMs in addition to a final correctness reward.
    \item \textbf{Algorithm:} Group Relative Policy Optimization (GRPO).
    \item \textbf{LoRA Configuration:}
    \begin{itemize}
        \item Rank $r=8$, Alpha $\alpha=16$.
        \item \textbf{Target Modules:} All linear layers.
    \end{itemize}
    \item \textbf{Hyperparameters:}
    \begin{itemize}
        \item \textbf{Learning Rate:} $5 \times 10^{-7}$ (AdamW).
        \item \textbf{Group Size:} 16.
        \item \textbf{KL Coefficient:} $\beta = 0.01$.
        \item \textbf{Clip Ratio:} $\epsilon = 0.2$.
    \end{itemize}
    \item \textbf{Reward Formulation:} Since GRPO requires the starting point to be consistent within the group, we calculated the reward as the sum of the PRM probabilities across all reasoning steps.
\end{itemize}

\end{document}